\documentclass[sigconf]{acmart}

\usepackage{amsmath,amsfonts}
\usepackage{amsthm}
\usepackage{pdfpages}
\usepackage{mathtools}
\usepackage{algorithm}
\usepackage{algpseudocode}
\usepackage{graphicx}
\usepackage{textcomp}
\usepackage{xcolor}
\usepackage{url}  
\usepackage{booktabs}  
\usepackage{nicefrac}       
\usepackage{microtype}      
\usepackage[caption=false,font=footnotesize]{subfig}
\usepackage{enumitem}
\usepackage[table]{xcolor}
\usepackage{multirow}
\usepackage{booktabs}

\theoremstyle{plain}
\newtheorem{theorem}{Theorem}[section]

\theoremstyle{definition}

\theoremstyle{remark}

\DeclareMathOperator{\IQ}{IQ}
\DeclareMathOperator{\I}{I}

\DeclareMathOperator{\NMI}{NMI}
\DeclareMathOperator{\D}{d}

\DeclareMathOperator{\len}{len}

\AtBeginDocument{%
  }

\copyrightyear{2026}
\acmYear{2026}
\setcopyright{cc}
\setcctype{by}
\acmConference[KDD '26]{Proceedings of the 32nd ACM SIGKDD Conference on Knowledge Discovery and Data Mining V.2}{August 09--13, 2026}{Jeju Island, Republic of Korea}
\acmBooktitle{Proceedings of the 32nd ACM SIGKDD Conference on Knowledge Discovery and Data Mining V.2 (KDD '26), August 09--13, 2026, Jeju Island, Republic of Korea}

\begin{document}

\title{Estimating Mutual Information between Time Series and Temporal Event Sequences Across Diverse Analysis Tasks}

\author{Haoji Hu}
\authornote{Both authors contributed equally to this research. This paper has been accpeted to KDD 2026.}
\email{huxxx899@umn.edu}
\affiliation{%
  \institution{University of Minnesota - Twin Cities}
  \city{Minneapolis}
  \state{MN}
  \country{USA}
}

\author{Huaqing Mao}
\authornotemark[1]
\email{maohqphy@hotmail.com}
\affiliation{%
  \institution{University of Minnesota - Twin Cities}
  \city{Minneapolis}
  \state{MN}
  \country{USA}
}

\author{Yijun Lin}
\email{lin00786@umn.edu}
\affiliation{%
  \institution{University of Minnesota - Twin Cities}
  \city{Minneapolis}
  \state{MN}
  \country{USA}
}

\author{Xiaowei Jia}
\email{xiaowei@pitt.edu}
\affiliation{%
  \institution{University of Pittsburgh}
  \city{Pittsburgh}
  \state{PA}
  \country{USA}
}

\author{Jinwei Zhou}
\email{zhou1909@umn.edu}
\affiliation{%
  \institution{University of Minnesota - Twin Cities}
  \city{Minneapolis}
  \state{MN}
  \country{USA}
}

\author{Minoh Jeong}
\email{minohj@inha.ac.kr}
\affiliation{%
  \institution{Inha University}
  \city{Incheon}
  \state{Incheon}
  \country{Republic of Korea}
}

\author{Yao-Yi Chiang}
\email{yaoyi@umn.edu}
\affiliation{%
  \institution{University of Minnesota - Twin Cities}
  \city{Minneapolis}
  \state{MN}
  \country{USA}
}

\renewcommand{\shortauthors}{Haoji Hu et al.}

\begin{abstract}
Pairwise dependence measures such as correlation and causality are fundamental to temporal data mining, yet there is still no principled and robust way to quantify dependence between heterogeneous data types, especially between continuous time series and discrete temporal event sequences. Existing approaches rely on ad hoc transformations or mutual-information estimators that are highly sensitive to quantization, repeated values, and event redundancy, leading to biased or unstable results in practice. We propose a nonparametric mutual information estimator that directly measures the dependence between time series and event sequences without data transformation, learning, or ad hoc discretization. Our method models the continuous–discrete duality of real-world time series to handle quantization and repeated-value artifacts and introduces a latent event clustering strategy to mitigate bias from event co-occurrence and redundancy. Together, these yield a robust and unified framework that bridges discrete and continuous mutual information. We evaluate the proposed estimator on four representative tasks: discrete–continuous time-delayed mutual information for causality analysis, global and local temporal repetition discovery, discrete covariate selection for time series forecasting, and continuous feature selection for classification. Experiments on synthetic and real-world datasets show consistent improvements over existing methods in accuracy, robustness, and interpretability, positioning our approach as a general-purpose dependence operator for heterogeneous temporal data, similar to Pearson correlation for homogeneous time series. Code: \url{https://github.com/HaojiHu/Multimodal-Temporal-Data-Quantification}
\end{abstract}

\begin{CCSXML}
<ccs2012>
   <concept>
       <concept_id>10002950.10003712</concept_id>
       <concept_desc>Mathematics of computing~Information theory</concept_desc>
       <concept_significance>500</concept_significance>
       </concept>
   <concept>
       <concept_id>10002944.10011123.10010916</concept_id>
       <concept_desc>General and reference~Measurement</concept_desc>
       <concept_significance>100</concept_significance>
       </concept>
 </ccs2012>
\end{CCSXML}

\ccsdesc[500]{Mathematics of computing~Information theory}
\ccsdesc[100]{General and reference~Measurement}


%
\keywords{Mutual Information, Multimodal Temporal Data, Time Series, Temporal Event Sequences, Seasonality, Feature Selection, Covariate}


\maketitle

\section{Introduction}
Time series and temporal event sequences are two of the most common and fundamental forms of temporal data in real-world applications. Time series represent continuous-valued measurements evolving over time, such as temperature, traffic volume, or magnetic field intensity, whereas temporal event sequences consist of discrete events characterized by timestamps and symbolic event types, such as system alerts, transactions, or scientific phenomena. These two data types naturally coexist in many domains, where continuous physical or socioeconomic processes are influenced by, or give rise to, discrete events, and vice versa.

Quantifying pairwise relationships between \textbf{time series} and \textbf{temporal event sequences} is a core problem in temporal data mining and supports a wide range of downstream tasks, including correlation and causality analysis, temporal pattern discovery, and feature or covariate selection. For example, in solar physics, estimating the time-delayed mutual information (TDMI) between discrete solar event sequences and continuous geomagnetic field measurements is crucial to understanding how and when solar activity influences Earth’s magnetic field~\cite{chitta2023picoflare}. Similar heterogeneous temporal relationships appear in transportation systems (traffic volume versus calendar events) and sales forecasting (sales time series versus promotions), highlighting their broad practical importance.

Most existing dependence measures focus on homogeneous temporal data. Pearson correlation and its variants quantify relationships between two time series (Figure~\ref{fig:example}(a)), while point-wise mutual information measures dependencies between two temporal event sequences (Figure~\ref{fig:example}(b)). These methods are effective within a single data type but do not directly support cross-type temporal analysis, such as measuring the interaction between a continuous time series and a discrete temporal event sequence (Figure~\ref{fig:example}(c)).
A common workaround is to transform one temporal data type into another and then apply standard homogeneous measures. Typical approaches include discretizing time series into symbolic sequences via binning~\cite{liu2002discretization}, or converting event sequences into numerical series by assigning integer identifiers (IDs) to symbols~\cite{ross2014mutual}. Such transformations introduce ad hoc design choices (e.g., bin width or symbol ordering) and impose non-existent artificial structures, leading to biased estimates and reduced interpretability. Some prior work has explored event–time series relationships using hypothesis testing that analyzes signal behavior around event occurrences~\cite{luo2014correlating}, but does not explicitly quantify dependence. More importantly, existing approaches fail to account for real-world characteristics of temporal data, such as sensor quantization effects, repeated values in time series, and correlations among events in temporal event sequences.

\begin{figure*}
\centering
\includegraphics[width=0.87\textwidth, height=0.13\textheight]{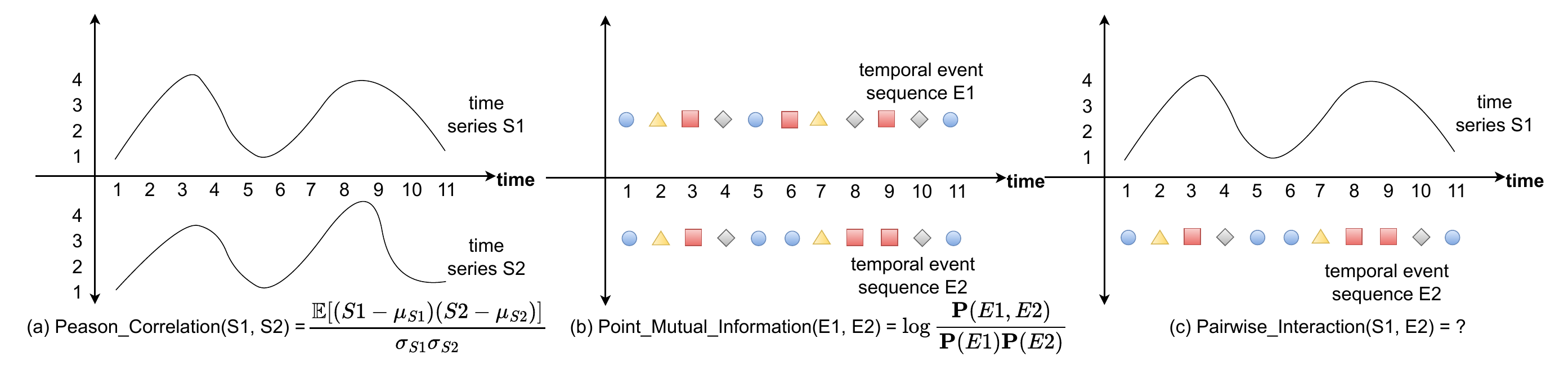}
\captionsetup{font=small}
  \caption{Dependence analysis across temporal data types. (a) Pearson correlation between two time series, where each time step is a continuous value. (b) Point-wise MI between two temporal event sequences, where each time step is a discrete event (shape + color), and marginal and joint probabilities are estimated from event frequencies. (c) Unknown cross-type interaction between a time series and a temporal event sequence.}
\label{fig:example} 
\end{figure*}

Mutual information (MI) is a model-free, principled measure of statistical dependence that applies to both discrete and continuous variables, making it a natural choice for cross-type temporal analysis. However, existing MI estimators do not adequately handle heterogeneous temporal data in practice. They face two fundamental challenges: (1) real-world time series often violate strict continuity assumptions of continuous variables due to finite precision and repeated values; for example, temperature recorded at integer or low-decimal resolution usually takes identical values across many timestamps. 
(2) Event types are typically treated as independent symbols, ignoring redundancy in their induced distributions over continuous values. These mismatches between theoretical assumptions and real data lead to unstable and biased estimates.

In this paper, we propose a nonparametric MI estimator that directly quantifies dependence between a time series and a temporal event sequence, without data transformation, discretization heuristics, or learning-based modeling. Our method introduces two key ideas:
(1) a \emph{continuous–discrete duality representation} that explicitly models the coexistence of continuous variation and repeated values caused by finite measurement precision;
and (2) an optional \emph{latent event clustering} strategy that aggregates redundant or highly correlated event types into representative latent events.
Together, our proposed estimator unifies classical discrete and continuous MI and serves as a general-purpose dependence operator for heterogeneous temporal data.
We demonstrate the generality and effectiveness of the proposed estimator through four representative tasks involving time series and temporal event sequences.

\noindent \textbf{(1) Time-delayed mutual information estimation}, extending the quantification of  lagged dependencies from homogeneous time series to  time series–event sequence pair;

\noindent\textbf{(2) Temporal repetition analysis}, identifying repeated temporal structures in time series, including both global seasonality and localized, event-conditional repetition patterns;

\noindent\textbf{(3) Covariate analysis for time series forecasting}, ranking discrete covariates by their relevance to a continuous target time series to support reliable and interpretable forecasting;
and 

\noindent\textbf{(4) Extension to mixed-type tabular data analysis}, extending beyond temporal settings to enable feature selection by measuring the dependence between continuous features and discrete labels.

Extensive experiments on synthetic and real-world datasets show that the proposed measurement consistently outperforms existing methods and offers interpretable results, establishing it as a foundational building block for heterogeneous temporal data mining. In summary, our main contributions are:

\noindent\textbf{(1)} A nonparametric MI estimator that directly quantifies dependence between a time series and a temporal event sequence while accounting for quantization and repeated values.

\noindent\textbf{(2)} A clustering strategy that reduces redundancy from event co-occurrence in event sequences.

\noindent\textbf{(3)} An empirical evaluation demonstrating effectiveness across core data mining tasks, including cross-modal TDMI estimation, temporal repetition pattern discovery, discrete covariate selection for forecasting, and continuous feature selection for classification.

\section{Related Work}

\subsection{Mutual Information Estimation}
Estimating mutual information (MI) for continuous variables is a long-standing challenge~\cite{walters2009estimation}. Existing parametric methods rely on strong assumptions about the underlying data distribution (e.g., Gaussianity), which are often violated in real-world temporal data. In contrast, nonparametric methods are more flexible and make fewer assumptions, making them the dominant choice in practice.

Nonparametric MI estimators such as KSG~\cite{kraskov2004estimating} and its extensions~\cite{nagel2024accurate} are built upon the Kozachenko–Leonenko entropy estimator~\cite{kozachenko1987sample}. These methods assume strictly continuous variables, under which the probability of observing identical samples is zero (i.e., the nearest-neighbor distance is always positive).
In real-world time series, however, values are observed with finite precision due to sensor limitations, rounding, or quantization, easily leading to repeated values. When duplicates occur, nearest-neighbor distances collapse to zero, producing degenerate neighborhoods and unstable entropy or MI estimates. This issue is often ignored, heuristically mitigated, or assumed to vanish in high-dimensional data, but it becomes critical in low-dimensional temporal settings (e.g., univariate time series). We address this mismatch between theoretical assumptions and practical data by modeling a time series as a mixture of continuous and discrete components. This formulation treats repeated values as discrete probability mass rather than anomalous cases, thereby resolving a core limitation of existing nonparametric MI estimators without resorting to ad hoc fixes.
Recent progress like~\cite{belghazi2018mutual}\cite{chen2024addressing} learn an estimator. But, these methods require model training and fail to directly estimate the MI for just one pair data.

\subsection{Temporal Data Analysis}
Temporal dependence analysis has been extensively studied for homogeneous data types. Classical approaches such as canonical correlation analysis~\cite{akaike1976canonical}, co-occurrence analysis~\cite{church1990word}, and sequence similarity measures~\cite{magallanes2021sequen} focus on quantifying dependence within a single temporal data type.
In contrast, cross-type temporal analysis between continuous time series and discrete-event sequences has received far less attention. Yet, such heterogeneous temporal relationships are ubiquitous in real-world applications. In this paper, we focus on three representative tasks that highlight the importance of this problem. We review related work for each task below.

\noindent
\textbf{Time-Delayed Mutual Information Estimation.}
Existing TDMI methods assume both variables are continuous time series and are therefore not applicable when one variable is a temporal event sequence. While discrete–continuous MI estimators~\cite{ross2014mutual,gao2017estimating} can in principle be applied, they rely on neighborhood-based density estimation, inherit strict continuity assumptions, and ignore quantization artifacts and event redundancy. Our approach directly models time series–event sequence interactions, enabling robust and interpretable cross-type TDMI estimation.

\noindent
\textbf{Temporal Seasonality and Repetition Analysis.}
Seasonality detection has traditionally relied on analyzing a time series as a continuous signal. Many techniques have been developed for this purpose, including autocorrelation function (ACF)~\cite{box2015time}, seasonal decomposition method STL~\cite{cleveland1990stl}, and Fourier transform (FT)~\cite{brillinger2001time}. 

ACF detects periodicity by identifying peaks in specific lags. Fourier methods identify dominant frequencies associated with repeating patterns. Despite their effectiveness, these techniques are sensitive to outliers and missing values, as each time step is modeled deterministically. Such irregularities can significantly distort the resulting ACF or FT.

Our method reframes seasonality as a dependence problem between a time series and a temporal event sequence (e.g., calendar structure), enabling a probabilistic treatment that is robust and unifies global and localized repetition patterns.

\noindent
\textbf{Covariate Analysis and Feature Selection.}
Feature and covariate selection methods are two parallel analysis tasks that share common problem structures, especially for continuous feature selection and discrete covariate selection. Both model the interaction between continuous values and discrete symbols, i.e., selecting continuous features for discrete-label prediction and discrete covariates for time-series forecasting. Although model-based methods have been proposed~\cite{guyon2003feature}\cite{belmonte2014hierarchical}, due to model bias, model-agnostic methods are more widely used. Information-theory-based methods~\cite{dougherty1995supervised,liu2002discretization} are representative model-agnostic methods. However, existing methods often rely on discretization when continuous variables are involved. A common strategy is binning~\cite{dougherty1995supervised,liu2002discretization}, which partitions the continuous value range into $k$ intervals and assigns each interval a discrete symbol. Although simple and widely used, binning introduces a strong inductive bias through the choice of bin boundaries and widths. Moreover, discretization assumes a one-to-one mapping between continuous values and discrete symbols. The same continuous value may be associated with multiple discrete events, and forcing a hard partition can distort this relationship and reduce the reliability of feature relevance estimates. 

These limitations motivate the need for a measure that can directly quantify relationships between continuous values and discrete events. Our proposed estimator addresses this gap and provides a principled alternative for covariate analysis and feature selection in heterogeneous data.

\section{Information Theory based Measurement}

\noindent
\textbf{Problem definition}: Given a time series of continuous values $\mathbf{S} = (s_1, s_2, ..., s_m)$, where $m$ represents the number of time steps, and a temporal event sequence of discrete symbols $\mathbf{E} = (e_1, e_2, ..., e_n)$, where $n$ is the number of observed events, our goal is to quantify their interaction, denoted as $\IQ (\mathbf{S}, \mathbf{E})$, which measures the degree of correlation between temporally aligned elements of $\mathbf{S}$ and $\mathbf{E}$. The time indices encode only relative temporal order; two modalities need not share identical timestamps and may be sampled asynchronously across data sources.

Our proposed measurement builds on mutual information (MI), a principled measure of statistical dependence between two random variables. We first present an MI formulation between a discrete and a continuous variable by unifying Shannon MI for discrete variables with differential MI for continuous variables (Section~\ref{sec: 31}). 
For continuous-valued time series $\mathbf{S}$, we introduce a continuous–discrete duality representation that models the empirical distribution as a mixture of a continuous component and a precision-induced discrete component, and we extend this formulation to estimate MI between a mixture variable and a discrete variable (Section~\ref{sec: 32}).
For a discrete event sequence $\mathbf{E}$, we introduce an optional clustering strategy that maps event types to a latent event sequence with reduced cardinality, allowing correlated or redundant event types to be modeled jointly (Section~\ref{sec: 33}).

\subsection{MI of Discrete-Continuous Variables}\label{sec: 31}

In Shannon mutual information, we consider two discrete random variables $X$ and $Y$, where $x$ and $y$ denote their values, and $\mathcal{X}$ and $\mathcal{Y}$ denote their domains, i.e.,  $x\in \mathcal{X}$ and $y\in \mathcal{Y}$. If both $\mathcal{X}$ and $\mathcal{Y}$ are subsets of  a countable set of categories $\{\mathcal{C}_1, \mathcal{C}_2, ..., \mathcal{C}_n\}$ (i.e., n is the cardinality size of the countable set and we can get the union set of both sets as the countable set), the MI can be defined based on as,
{\small
\begin{align}
\I(X, Y) = \sum_{y\in \mathcal{Y}}\sum_{x\in \mathcal{X}}P_{(X,Y)}(x,y)\log \left(\frac{P(x, y)}{P(x)P(y)}\right)
\label{eq:mi-dis}
\end{align}
}

In differential mutual information, for two continuous random variables $X$ and $Y$, where the range of $x$ and $y$ is $\mathbb{R}$, the MI is, 
{\small
\begin{align}
\I(X, Y) \!=\! \int_{\mathcal{Y}}\int_{\mathcal{X}}P(x,y)\log \left(\frac{P(x, y)}{P(x)P(y)}\right) \D \!x\D \!y
\label{eq:mi-con}
\end{align}
}

A natural extension of Eqs.~\ref{eq:mi-dis} and~\ref{eq:mi-con} defines the MI between a discrete variable $X$ and a continuous variable $Y$. Specifically, we assume that $X$ takes values from a countable set of categories
$\mathcal{X}={\mathcal{C}_1, \mathcal{C}_2, \ldots, \mathcal{C}_n}$, and $Y$ takes values in the continuous domain $\mathcal{Y}=\mathbb{R}$. The MI between them is,
{\small
\begin{align}
\I(X, Y) = \sum_{x \in \mathcal{X}}\int_{y\in \mathcal{Y}}P(x,y)\log \left(\frac{P(x, y)}{P(x)P(y)}\right) \D y
\label{eq:mi-mix}
\end{align}
}

Equivalently, the MI can be expressed in terms of the entropy as:
{\small
$
\I(X,Y) = H(X) + H(Y) - H(X,Y), 
$
}
We can rewrite Eq.~\ref{eq:mi-mix} as 
{\small
\begin{align}
\I(X, Y) &\!=\! -\!\sum_{x\in \mathcal{X}}\! P(x)\log P(x) \!-\!\! \int_{\mathcal{Y}}\!\!P(y)\log P(y)\!\D\! y \notag\\
&+ \sum_{x\in \mathcal{X}} \int_{\mathcal{Y}}\!P(x,y)\log P(x,y) \!\D\! y
\label{eq:mi}
\end{align}
}

If we fix a specific value of $x$ and only consider $Y$ as a variable, the joint entropy in Eq.~\ref{eq:mi} can be written as
{\small
\begin{align}
\int_{\mathcal{Y}} P(x,y)\log P(x,y)\,\mathrm{d}y
&= p(x)\log p(x) \notag\\
&\quad + p(x)\int_{\mathcal{Y}}
P(y\mid x)\log P(y\mid x)\,\mathrm{d}y
\label{eq:integrate}
\end{align}
}

where the marginal distributions are $P(x) = p(x)$ and 
$P(y) = \sum_{x\in \mathcal{X}} p(x)P(y|x)$. 
The joint probability distribution becomes,
{\small
\begin{align*}
P(x,y) &= p(x) P(y | x), \sum_{x\in\mathcal{X}} p(x) = 1,
\int_{\mathcal{Y}} P(y|x)\,\mathrm{d}y = 1
\end{align*}
}

Based on Eqs.~\ref{eq:mi} and ~\ref{eq:integrate}, the {\small$\I(X, Y)$} between $X$ and  $Y$ is, 
{\small
\begin{align}
\I = \sum_{x\in \mathcal{X}} p(x)\int P(y|x)\log p(y|x) \D\! y -\int P(y)\log P(y)\D \!y 
\label{eq:mi-example}
\end{align}
}

The probability mass function of the discrete variable $X$, denoted by $p(x)$, can be directly estimated from data using empirical frequencies of the corresponding discrete value $x$. In contrast, estimating the distribution of a continuous variable is more challenging. A common approach is to assume a Gaussian distribution, which is parametric and relies on the strong assumption that the data follow a unimodal normal distribution. This assumption is often unrealistic for temporal data, which frequently exhibit nonstationarity and complex dynamics. To avoid such restrictive assumptions, we adopt a classical nonparametric and asymptotically unbiased entropy estimator introduced by~\cite{kozachenko1987sample}, which directly estimates the entropy of a continuous variable from samples. Specifically, given $N$ samples ${x_i}_{i=1}^{N}$, the entropy is estimated as:
{\small
$
H = ln(\overline{\rho}) + ln(c) + ln(\gamma) + ln(N-1),
\label{eq:ent-est}
$
} where {\small$\displaystyle \overline{\rho} = \{\prod^N_{i=1}\rho_i\}^{1/N}$} (i.e., {\small$\rho_i=min(\{|x_i-x_j|, j\neq i\})$}), {\small$c=\pi^{\frac{1}{2}}/\Gamma(\frac{3}{2})$}, and $ln(\gamma)$ is the Euler constant ($\approx 0.5772$). 


MI takes values in the range $[0, \infty)$, which makes its magnitude difficult to interpret without knowing max pivot. To obtain a normalized measure analogous to  Pearson correlation, which lie in $[0,1]$, it is common to normalize MI. Among the various normalization schemes summarized in~\cite{nagel2024accurate}, we adopt ~\cite{lange2006generalized},
{\small
\begin{align}
\NMI(X, Y) &= \sqrt{1-\exp[-2\I(X,Y)/(\D X + \D Y)]}
\label{eq:nomralization}
\end{align}
}

As this transformation is for continuous variables and the unit $\D X$ is not defined for discrete symbols, we propose an approximation $\D X$=1, as symbols are usually represented by integer IDs. 

Although the entropy of a continuous variable can be estimated using nonparametric entropy estimators, differential entropy is not guaranteed to be nonnegative. By definition, $H(Y)=-\int_{\mathcal{Y}}P_Y(y)\cdot$ $\log P_{Y}(y)\D y$, $\int_{\mathcal{Y}}P_Y(y)\D y=1$, and $\log P_{Y}(y)\in (-\inf, \inf)$. 

Since our MI estimation is derived from entropy terms of continuous variables (see Eq.~\ref{eq:mi-example}), negative differential entropy estimates may propagate and yield negative MI. This violates the fundamental non-negativity property of MI and indicates a failure of the estimation. We provide a formal proof of this property in the Appendix.


This issue can be formally addressed by introducing an invariant measure density $m(y)$ to normalize the logarithmic term and convert differential entropy into a relative entropy form,
{\small
$
H(Y) = \int P(y)\log \frac{P(y)}{m(y)}\, \mathrm{d}y,
$}
as proposed in~\cite{jaynes1968prior}. Fortunately, incorporating $m(y)$ does not affect the value of MI, since the relevant terms cancel out when computing MI (see Eq.~\ref{eq:mi})~\cite{nagel2024accurate}. Therefore, MI remains unchanged regardless of whether the invariant measure is explicitly introduced, and we omit $m(y)$ in our implementation for simplicity.

Another issue that is largely overlooked in prior analyses~\cite{nagel2024accurate} is the possibility of obtaining negative estimates due to finite-sample variability. When the true MI is close to zero, small estimation errors in the entropy terms can cause the estimated values to slightly undershoot their true magnitudes, resulting in negative entropy contributions and, consequently, negative MI. Although such outcomes are theoretically impossible, they can arise in practice due to estimation noise. To guarantee the non-negativity  of MI, we apply a simple truncation: $I \leftarrow \max(I, 0)$. This operation leaves all valid positive estimates unchanged while removing spurious negative values introduced by estimation noise. It provides a numerically stable, theoretically consistent safeguard to improve the robustness without affecting meaningful dependence measurements.

\subsection{Continuous-Discrete Duality}\label{sec: 32}

Modeling a time series purely as a continuous variable with a density function (e.g., $P_Y$) overlooks the fact that real-world measurements often exhibit ordinal characteristics. Ordinal data are categorical in nature but preserve a meaningful ordering among possible values. For example, temperature observations are bounded by physical limits and are recorded with finite precision due to instrument constraints, such as rounding to one decimal place. As a result, the observed values belong to a countable and ordered set rather than a truly continuous domain, a property that cannot be faithfully captured by a purely continuous density model. Such discretization with preserved value ordering makes the observed data analogous to ordinal data. However, temperature itself is a continuous macroscopic variable defined over a thermodynamic system~\cite{landau2013statistical}. The discreteness arises not from the underlying physical process, but from measurement and recording constraints. Consequently, an observed temperature time series is jointly shaped by the continuous nature of the physical phenomenon and the discrete bias introduced by finite-precision instruments. More generally, real-world time series may exhibit purely continuous behavior, purely ordinal behavior, or a combination of both. This motivates a \emph{continuous--discrete duality} in the representation of time series values. That is, a time series variable should simultaneously capture continuous variation and discrete probability mass, and flexibly adjust its relative contributions based on the observed samples.

Formally, we redefine the density function of continuous variable $Y$ as a combination of a continuous distribution $f(y)$ and a discrete distribution $g(y)$, as {\small
$
p(y) = a~ f(y) + b~ g(y)
$
}, where {\small$\int f(y) dy = 1$}, {\small $\sum_{y} g(y) = 1$}, {\small $f(y) \cdot g(y) = 0$}, and {\small $a+b=1$}.

This formulation is interpreted as: each observed timestep value is generated from either the continuous distribution $f(y)$ with probability $a$ or the discrete distribution $g(y)$ with probability $b$.

Given this new formulation of $p(y)$, the entropy of $Y$ can be rewritten with the law of addition for the information entropy applied to both distributions $f(y)$ and $g(y)$ as follows: 
{\small
\begin{align}
H(p) &= \int a~ f(y) ln(a~ f(y)) dy + \sum_y b~ g(y) ln(b~ g(y)) \notag \\
  &= a~ln(a) + a~ H(f(y)) + b~ ln(b) + b~ H(g(y))
\label{eq:Y}
\end{align}
}

For the joint entropy {\small$H(X,Y)$} between the discrete variable $X$ and the redefined continuous–discrete variable $Y$, we decompose it into a weighted combination of two components, denoted by {\small$A(X,Y)$} and {\small$B(X,Y)$}. Here, {\small$A(X,Y)$} represents the joint entropy contributed by the samples in which $Y$ is generated from its continuous component and their aligned values in $X$, while {\small$B(X,Y)$} represents the joint entropy contributed by the samples in which $Y$ is generated from its discrete component and their aligned values in $X$. By the additivity property of entropy over mixture distributions, the joint entropy {\small$H(X,Y)$} can therefore be expressed as a linear combination of these two terms:
{\small
\begin{align}
H(X, Y) = a~ ln(a) + a~ H(A) + b~ ln(b) + b~ H(B)
\label{eq:XY}
\end{align}  
}

With Eqs.~\ref{eq:mi},~\ref{eq:Y}, and~\ref{eq:XY}, the mutual information is: 
{\small
\begin{align}
I(X, Y) = a~ \I_A + b~ \I_B + a~ ln(a) + b~ ln(b)
\label{eq:mi_cont}
\end{align}
}
where {\small$\I_A = \I(X_A, Y_A)$} can be calculated by Eq.~\ref{eq:mi-example} and {\small$\I_B=\I(X_B, Y_B)$} is calculated by Eq.~\ref{eq:mi-dis}.

The remaining question is how to partition the time series $Y$ into its continuous and discrete components. We propose to do so based on the presence of repeated values. Specifically, if a value appears only once in the entire series, we treat the corresponding time step as being generated from the continuous component $f(y)$ and include it in $Y_A$ for computing the MI {\small$I(X_A, Y_A)$}. The event aligned with this time step is accordingly assigned to $X_A$. In contrast, if a value appears more than once, all time steps with this value are treated as being generated from the discrete component $g(y)$ and included in $Y_B$, and their aligned events are included in $X_B$ for the calculation {\small$I(X_B, Y_B)$}. This partitioning is theoretically justified by the assumptions underlying classical nonparametric entropy estimators such as~\cite{kozachenko1987sample}, which are developed for strictly continuous variables, where the probability of observing identical samples is zero. In such a setting, any repeated value violates the continuity assumption, leading to degenerate neighborhoods.

\subsection{Latent Discrete Variable}\label{sec: 33}

In practice, temporal event sequences often contain event types that are highly correlated or redundant. In such cases, distinct event symbols may correspond to nearly identical conditional distributions over the aligned time-series values  $Y$, leading to unnecessary fragmentation and unstable dependence estimates. To address this issue, we introduce an optional event-clustering strategy that aggregates similar event types into latent events. The key idea is to replace redundant symbolic distinctions with a more compact representation that better reflects the data generation process.

This procedure consists of three steps: (1) for each event type, collecting all temporally aligned values from the time series $Y$ to form its empirical conditional distribution; (2) performing hierarchical clustering over these distributions using the Wasserstein distance;
and (3) replacing the original event type with its corresponding cluster label, yielding a latent event sequence. 
This transformation reduces redundancy while preserving the discrete nature and temporal structure. Importantly, the clustering does not alter the data modality and is applied only when strong correlations or overlaps among event types are present.

\section{Experiments}

Evaluating mutual information (MI) estimators is inherently challenging because ground-truth  are not available for real-world data. As a result, prior work has primarily relied on synthetic datasets for controlled evaluation~\cite{ross2014mutual, gao2017estimating}. Building on this practice, we first validate our method using carefully designed synthetic data with known ground truth. Beyond synthetic evaluation, we further assess the effectiveness of our measurement through a set of real-world data and analytic tasks in which MI serves as a core component. Improvements in task performance provide complementary and practical evidence of the utility and robustness of the proposed estimator in realistic settings.

\subsection{Time-Delayed Mutual Information}

This task evaluates the estimation of time-delayed mutual information (TDMI) between a time series $S$ and a temporal event sequence $E$. Given a time lag $\tau$, TDMI measures the dependence between the event type occurring at time $t$ in $E$ and the value of the time series at time $t+\tau$. A key advantage of the proposed measurement is that it natively supports heterogeneous variables, allowing the time delay to be incorporated directly into the MI formulation without transforming either modality. Specifically, TDMI is computed as
{\small
\begin{displaymath}
    \text{TDMI}(E(t), S(t+\tau)) = I(E(t), S(t+\tau)).
\end{displaymath}
}

\noindent \textbf{Dataset.} 
We construct a synthetic dataset to obtain ground-truth MI values. A temporal event sequence is generated by random sampling from three events (IDs are 1, 2, and 3) with equal probability. Each event  aligns with a distinct Gaussian distribution. The corresponding distributions are $\mathcal{N}(-0.7, 0.02^2)$, $\mathcal{N}(0, 0.02^2)$,  $\mathcal{N}(0.7, 0.02^2)$. When we sample an event, a corresponding continuous value is sampled from  the aligned Guassian distribution for the time series, with a fixed time delay of $\tau = 5$. To better reflect real-world measurement properties, we model finite sensor precision by restricting time series values to two decimal places. This introduces quantization effects and repeated values, which are common in practice but rarely considered in prior evaluations. We generate 10,000 time steps for both time series and temporal event sequence. The ground-truth MI is computed analytically from the known distributions using a Taylor-series approximation on the samples.


\noindent \textbf{Baselines.} 
We compare the proposed measurement against four representative baselines:  
(1)~\textbf{Series2Seq}: discretizing the time series into a symbolic sequence via binning, followed by point-wise MI computation;  
(2)~\textbf{Seq2Series}: converting the event sequence into a numeric time series using symbol IDs and measuring dependence via Pearson correlation;  
(3)~\textbf{Mixture}: a discrete–continuous MI estimator based on mixture modeling~\cite{gao2017estimating};  
and (4)~\textbf{Ross}: a nonparametric estimator for discrete–continuous MI~\cite{ross2014mutual}.  
For a fair comparison of MI estimation accuracy, we do not apply the event clustering strategy in our method for this experiment.

\noindent \textbf{Results.} 
Table~\ref{tab:tdmi} reports TDMI estimates across lag steps $\tau = 0,\ldots,9$. Transformation-based baselines Series2Seq and Seq2Series fail to recover the true dependence structure (i.e., Seq2Series has correlation less than 0.5), highlighting the limitations of forcing heterogeneous data into a homogeneous form. The Ross estimator produces negative values due to the heuristic resolution to handle the repeated values and sampling estimation variance around the 0 value, which violates the non-negativity property of MI ( Theorem~\ref{theorom:non-negative} is also valid for Ross).  

Both the Mixture estimator and our method correctly identify the peak dependence at $\tau = 5$ and near-zero dependence at other lags. However, our method consistently yields lower estimation error across different time lags $\tau$. These results demonstrate that explicitly modeling finite precision and continuous–discrete duality leads to more accurate and stable TDMI estimation. Additional synthetic experiments under varying time series lengths and value precision are reported in the Appendix~\ref{app:non-negative}. Our method consistently performs better over different data length and digit precision. 
\begin{table}[t]
\centering
\captionsetup{font=small}
\caption{
TDMI results across time lags $\tau$. 
Bold indicates the detected delay.
Values in parentheses is mean squared error (MSE, $5$ numerical precision) relative to G.T. Series2Seq and Seq2Series do not output discrete–continuous MI, so MSE is not reported. 
}
\label{tab:tdmi}
\resizebox{0.47\textwidth}{!}{
\begin{tabular}{l c c c c c c}
\toprule
$\boldsymbol{\tau}$ & G.T. & Series2Seq & Seq2Series & Ross & Mixture & Ours \\
\midrule
0 & 0.0376 & 0.0000 & -0.0158 & -3.1647 (10.254) & 0.0312 (4.0E$-$05) & 0.0353 (1.0E$-$05) \\
1 & 0.0423 & 0.0000 &  0.0095 & -3.1639 (10.279) & 0.0308 (1.3E$-$04) & 0.0352 (5.0E$-$05) \\
2 & 0.0402 & 0.0000 & -0.0071 & -3.1633 (10.262) & 0.0322 (6.0E$-$05) & 0.0373 (1.0E$-$05) \\
3 & 0.0385 & 0.0000 &  0.0024 & -3.1627 (10.247) & 0.0329 (3.0E$-$05) & 0.0387 (0) \\
4 & 0.0374 & 0.0000 &  0.0057 & -3.1628 (10.241) & 0.0314 (4.0E$-$05) & 0.0366 (0) \\
\midrule
5 & 1.0984 & 0.0000 & \textbf{-0.4998} & \textbf{-3.1468} (18.022) & \textbf{1.1078} (9.0E$-$05) & \textbf{1.0988} (0) \\
\midrule
6 & 0.0388 & 0.0000 & -0.0151 & -3.1655 (10.267) & 0.0277 (1.2E$-$04) & 0.0349 (2.0E$-$05) \\
7 & 0.0397 & 0.0000 & -0.0016 & -3.1639 (10.263) & 0.0326 (5.0E$-$05) & 0.0376 (0) \\
8 & 0.0393 & 0.0000 &  0.0006 & -3.1657 (10.272) & 0.0308 (7.0E$-$05) & 0.0358 (1.0E$-$05) \\
9 & 0.0384 & 0.0000 & -0.0063 & -3.1652 (10.263) & 0.0309 (6.0E$-$05) & 0.0355 (1.0E$-$05) \\
\bottomrule
\end{tabular}
}
\end{table}

\subsection{Repeated Temporal Patterns}
This experiment evaluates the proposed measurement for identifying \emph{repeated temporal patterns} in time series, including \textbf{global seasonality} and \textbf{local repeated patterns}. Our central contribution here is not only detecting whether a repeated pattern exists but also providing a \emph{quantitative, interpretable, and scalable} measure of pattern strength between a time series and a temporal event sequence, even when patterns are imperfect, disrupted, or context dependent. Repeated temporal patterns are often evident to humans through visual inspection when the dataset is small. For example, Figure~\ref{fig:time_steps}(a) shows a clear weekly seasonal pattern, while Figure~\ref{fig:time_steps}(b) illustrates a local repeated pattern occurring only on Wednesdays. In such limited cases, humans can manually construct an implicit ground truth by identifying whether a pattern exists and which temporal contexts (e.g., day of week) are relevant. However, this form of visual inspection does not scale to large datasets, long time horizons, or noisy real-world measurements, which are common in practice.  Instead, practical methods must (i) detect repeated temporal structure with flexibility and (ii) quantify its strength in a way that degrades gracefully under noise, missing values, and disruptions. Our proposed measurement is designed to fill this gap.

\begin{figure}
\centering
\includegraphics[width=0.9\columnwidth, height=0.08\textheight]{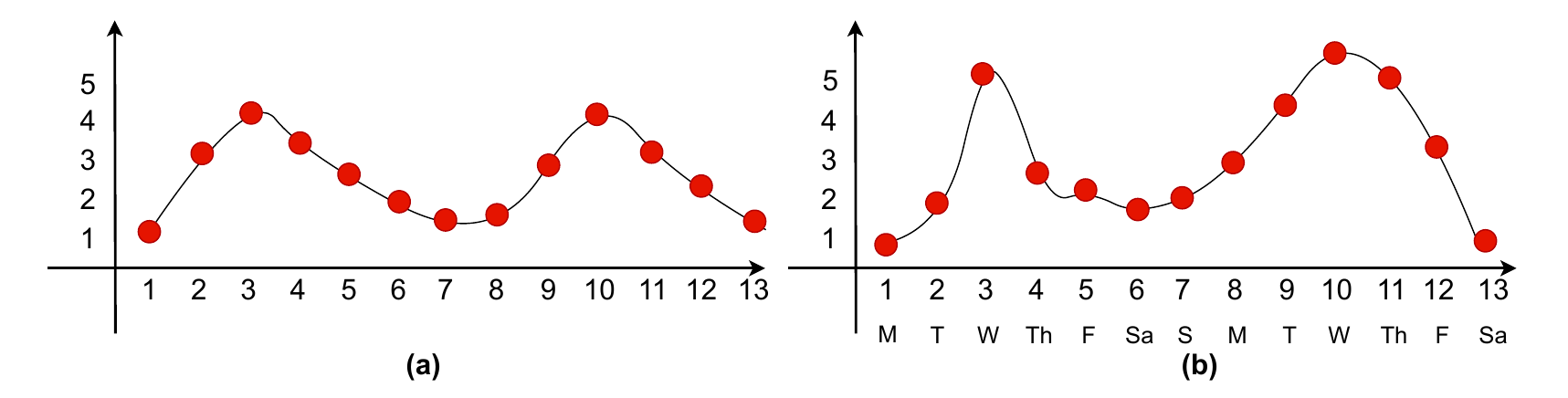}
\captionsetup{font=small}
\caption{(a) Global seasonality. (b) Local repeated pattern on Wed.}
\label{fig:time_steps}
\end{figure}

\noindent
\textbf{Seasonality} refers to recurring behaviors that repeat at fixed intervals and follow a periodic structure. By mapping timestamps to calendar attributes, we construct a temporal event sequence whose event types correspond to days of the week. Using our proposed measurement, seasonality is quantified as the MI between the time series $S$ and the calendar event sequence $W$:
{\small
\begin{displaymath}
   \text{Seasonality}(S, W) = \sum_{w \in \{\text{Mon},\dots,\text{Sun}\}} I(S, W=w).
\end{displaymath}
}
This formulation directly captures the dependence between continuous values and discrete temporal contexts, without assuming strict periodicity or requiring signal transformation. The proposed measurement can also capture \textbf{local repeated patterns} that arise only under specific temporal contexts. For example, Figure~\ref{fig:time_steps}(b) illustrates a pattern that consistently occurs on Wednesdays but not on other days. Using our framework, such a structure is naturally modeled by constructing a binary event sequence (e.g., Wednesday vs.\ non-Wednesday) and computing the MI between this event sequence and the time series, enabling principled quantification of localized and context-dependent repetition.

\subsubsection{Global Seasonality}

\noindent \textbf{Dataset.}
We use daily traffic volume data from the Department of Transportation in Minneapolis~\cite{grossman2024performance} over four months in 2023 (Figure~\ref{fig:seasonal}). Visual inspection provides a natural ground truth at this scale: strong weekly seasonality is present in June--July (Figure~\ref{fig:subfig1}), while the pattern is substantially weaker in December--January (Figure~\ref{fig:subfig2}) due to holiday disruptions.
\begin{figure}[ht!]
    \centering
    \subfloat[Traffic in Jun. and Jul.]{
        \includegraphics[width=0.23\textwidth]{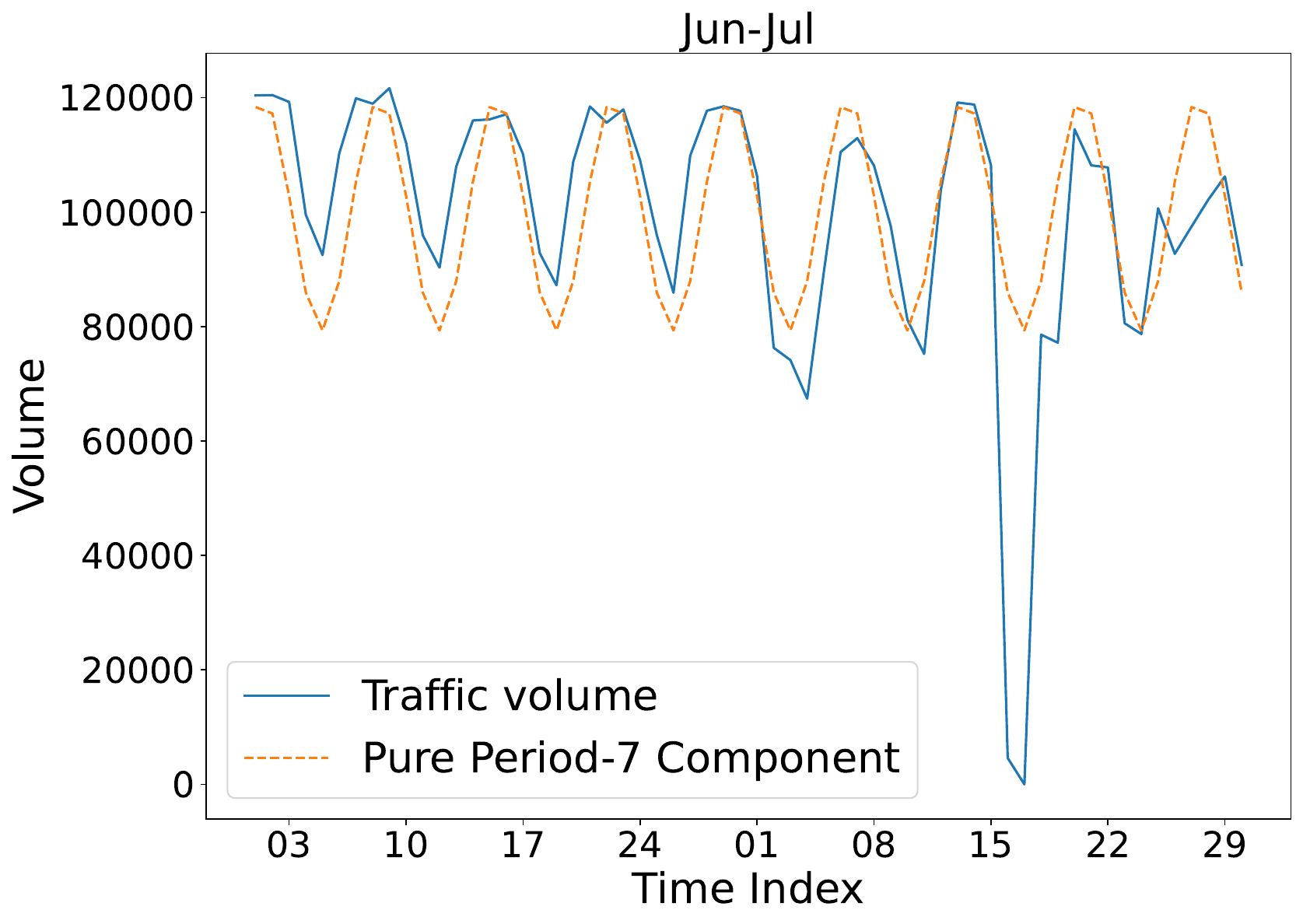}
        \label{fig:subfig1}
    }
    \subfloat[Traffic in Dec. and Jan.]{
        \includegraphics[width=0.23\textwidth]{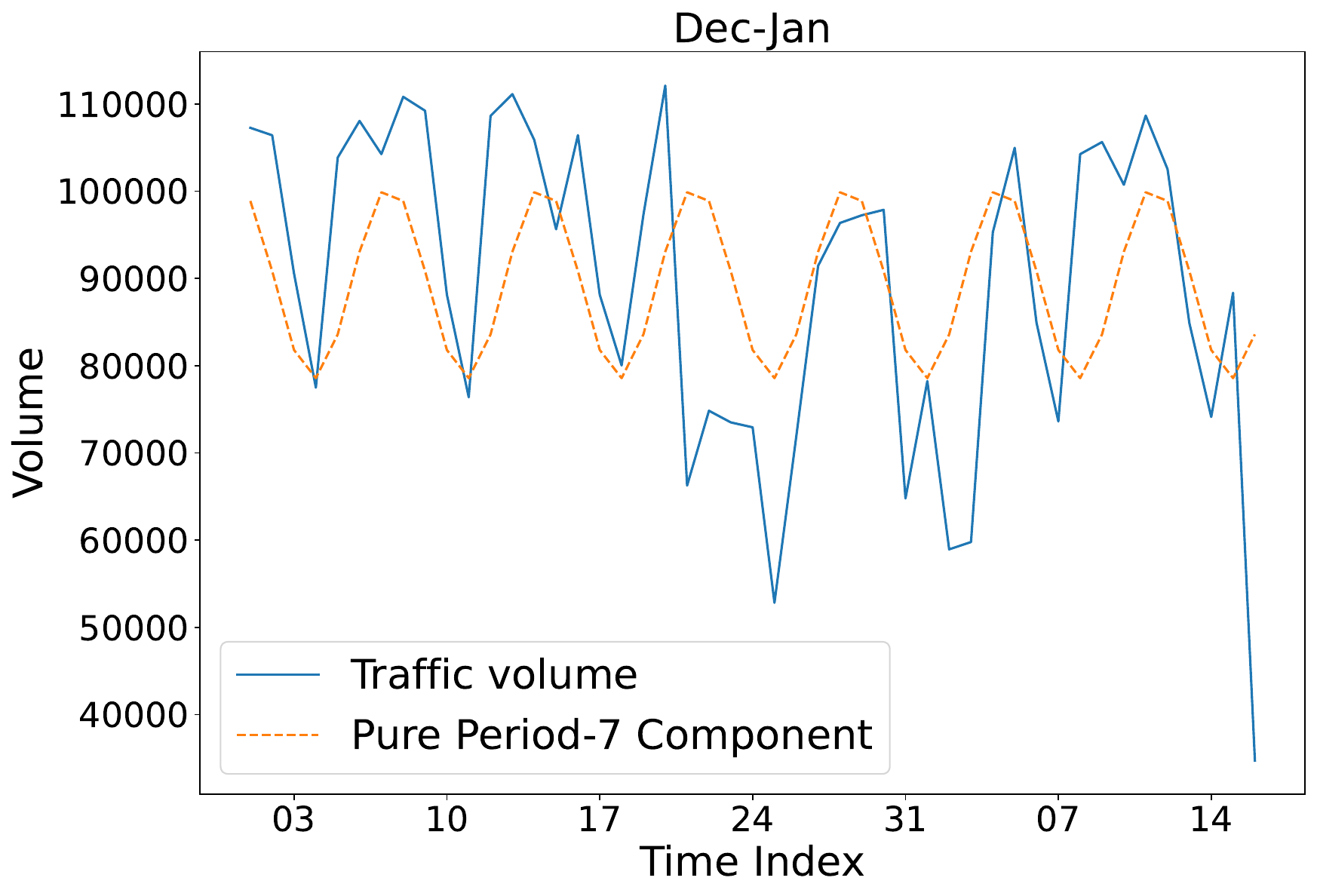}
        \label{fig:subfig2}
    }
    \captionsetup{font=small}
    \caption{Daily traffic volume in Minneapolis (blue) with weekly periodic components identified via harmonic regression (orange), which serve as a manually constructed reference for seasonal structure. Weekly seasonality is clearly presented in June--July, but disrupted in December--January due to holiday effects.}
    \label{fig:seasonal}
\end{figure}

\noindent \textbf{Baselines.}
We compare our method with classical seasonality detection approaches: (1)~\textbf{Auto-correlation function} (ACF)~\cite{box2015time}; (2)~\textbf{ACF+SD}, which applies seasonal decomposition~\cite{cleveland1990stl} followed by autocorrelation; and (3)~\textbf{Fourier Transform} (FT)~\cite{brillinger2001time}. 

\noindent \textbf{Results.}
Table~\ref{tab:seasonality} reports seasonality identification results. Both our method and SD+ACF correctly identify weekly seasonality, while ACF and FT fail under noise and irregularities. However, identifying a seasonal period alone is insufficient in practice. A useful method should also reflect \emph{how strongly} seasonality manifests and how it degrades under disruption.

 Although SD+ACF detects seasonality, it produces too high values close to 1, considering large offsets from the ideal seasonality observed in  Figure~\ref{fig:subfig1} and Figure~\ref{fig:subfig2}. In contrast, our proposed measurement yields an MI of $0.6501$ for June--July, capturing strong but imperfect seasonality, and a lower MI of $0.5664$ for December--January, reflecting stronger disruption. This shows that our method provides both detection and interpretable quantification of the strength of seasonality.

\begin{table}[th!]
\centering
\captionsetup{font=small}
\caption{Results of seasonality identification and interpretation. For FT, the dominant frequency indicates the seasonal period. For other methods, we report the maximum measure among candidate periods, with values >0.5 suggesting a likely seasonal pattern.}
\label{tab:seasonality}
\resizebox{0.3\textwidth}{!}{
\begin{tabular}{c|c|c|c}
\toprule
Data                     & Method   & Measure               & Seasonality \\ \midrule
\multirow{4}{*}{Jun-Jul} & ACF      & Autocorr=0.4651           & No          \\
                         & SD+ACF   & Autocorr=0.9817           & 7 days      \\
                         & FT       & Top 1 Freq.=0.13      & 7-8 days    \\ 
                         & \textbf{Ours}     & \textbf{MI=0.6501}             & \textbf{7 days}      \\ \midrule
\multirow{4}{*}{Dec-Jan} & ACF      & Autocorr=0.2181           & No          \\
                         & SD+ACF   & Autocorr=0.9157           & 7 days      \\
                         & FT       & Top 1 Freq.=0.08      & 11-12 days  \\ 
                         & \textbf{Ours} & \textbf{MI=0.5664}             & \textbf{7 days}      \\ \bottomrule
\end{tabular}
}
\end{table}

\subsubsection{Local Repeated Patterns}



\noindent \textbf{Dataset.}
A \emph{point repeated pattern} refers to repetition that emerges only under specific contextual conditions, rather than as a single global periodic structure. In the temperature setting, fine-grained temporal context (e.g., month and day/night) strongly constrains the temperature distribution, producing narrow, predictable value ranges, whereas coarser context yields weaker repetition and a larger temperature range. Classical wavelet-based methods focus on global periodicity and therefore struggle with such conditional, regime-dependent patterns. Our approach instead quantifies repetition through \emph{context-conditioned value distributions}, enabling a unified treatment of global seasonality, local repetition, and non-periodic structure.

We evaluate this setting using air temperature data collected at 12-hour intervals over one year~\cite{grossman2024performance}. The data are throughout 2023 (Figure~\ref{fig:temp}), resulting in 730 timestamps. Each timestamp corresponds to either the maximum daytime temperature (6\,am–6\,pm) or the minimum nighttime temperature (6\,pm–6\,am), reflecting the diurnal temperature range. Using the same timestamps, we construct three temporal event sequences with increasing contextual granularity:
(1) \textbf{DN}, labeling each timestep as \{Daytime, Night\};
(2) \textbf{TwoMon}, labeling each timestep by two-month calendar blocks (Jan–Feb,  \dots, Nov–Dec);
and (3) \textbf{DNTwoMon}, labeling each timestep by the joint context of day/night and two-month block.
For each construction, we compute the MI between the temperature series and the corresponding event sequence, measuring how much contextual information reduces uncertainty in the observed values.


\noindent \textbf{Results.}
Table~\ref{tab:point1} reports MI estimates between the temperature series and each event sequence. A valid measure of local repeated patterns should exhibit monotonic behavior as contextual information increases. Moving from DN to TwoMon to DNTwoMon introduces progressively richer temporal context that increasingly constrains the temperature distribution (e.g., January nighttime versus July daytime). Our normalized MI increases consistently (0.6124 $\rightarrow$ 0.8932 $\rightarrow$ 0.9596), matching both physical intuition and the expected reduction in conditional entropy.

In contrast, the Ross produces negative values that violate the non-negativity of MI, making it unsuitable for quantifying pattern strength in this setting. The Mixture yields positive but unbounded values whose scale grows with event granularity, making it unclear if the contextual event sequence is strong enough without knowing a max pivot. As a result, Mixture does not provide an interpretable reference in different contextual constructions.

\begin{figure}
\centering
\includegraphics[width=0.75\columnwidth, height=0.18\textheight]{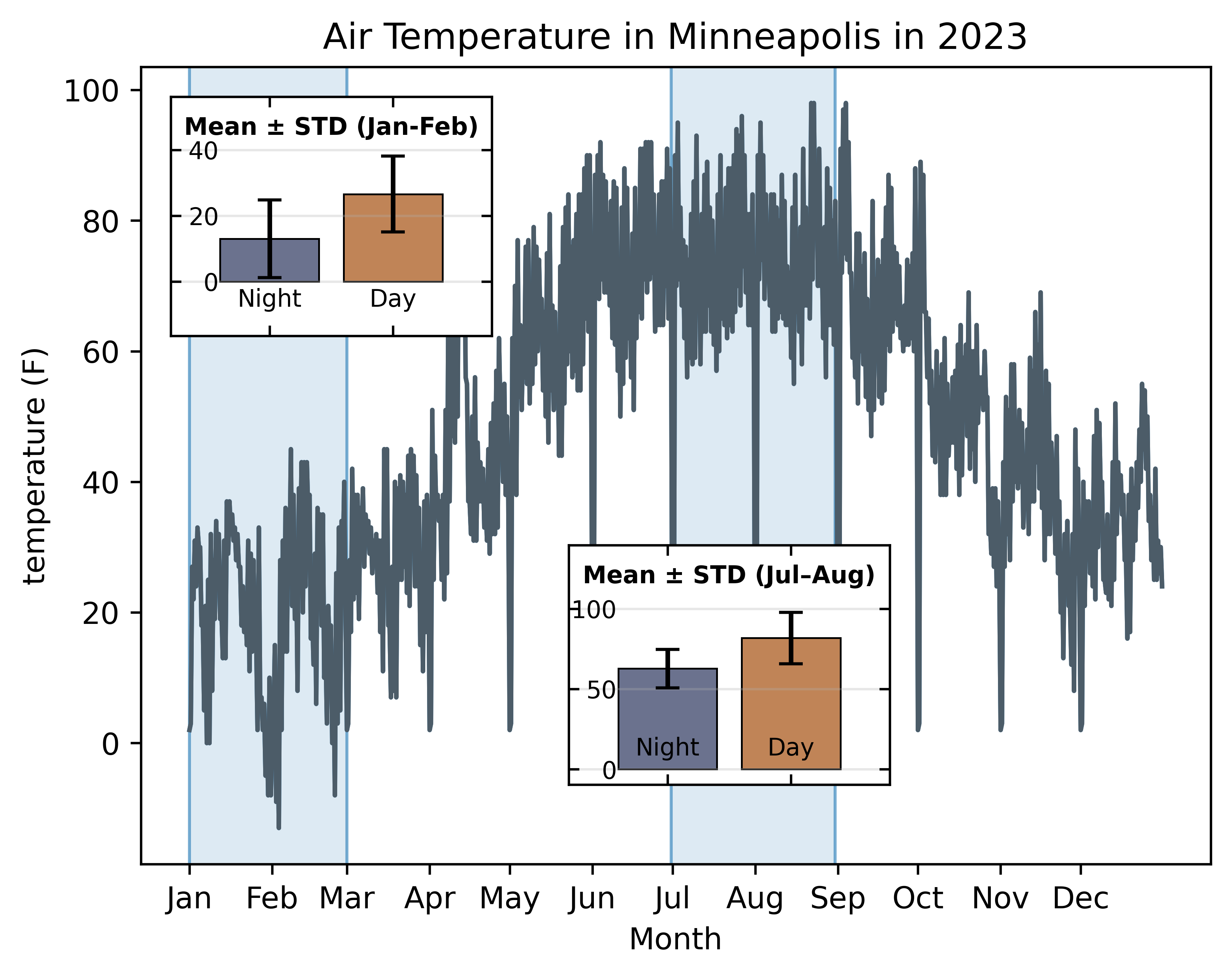}
\captionsetup{font=small}
  \caption{
Air temperature in Minneapolis during 2023, sampled at 12-hour intervals (daytime max, nighttime min). Shaded blue regions mark two-month intervals (Jan–Feb and Jul–Aug) within which daytime and nighttime temperatures form stable, narrow ranges (mean, $\pm$STD), revealing strong local repeated patterns. These patterns change across intervals, indicating that repetition is context-dependent rather than globally periodic.
}

\label{fig:temp}
\end{figure}

\begin{table}[h!]
\captionsetup{font=small}
\caption{
MI estimates for local repeated patterns under different temporal event constructions.
DN distinguishes daytime and nighttime; TwoMon partitions the year into two-month intervals; DNTwoMon combines both.
A valid measure should increase as the temporal context better aligns with temperature variation.
Our normalized MI provides a bounded, interpretable scale that clearly reflects increasing pattern strength, whereas other methods either produce invalid or unbounded scores that are difficult to interpret.
}
 \resizebox{0.65\columnwidth}{!}{%
    \begin{tabular}{l|c|c|c|c}
    \toprule
    Method  & Range & DN & TwoMon & DNTwoMon       \\ \midrule
    Ross    & $[0, \infty]$ & -1.8418 & -0.7864 & -0.2332 \\
    Mixture & $[0, \infty]$ & 0.3749 & 1.2807 & 1.5544 \\
    \textbf{Ours}    & \textbf{{[}0, 1{]}}    & \textbf{0.6124} & \textbf{0.8932} & \textbf{0.9596}    \\ \bottomrule
    \end{tabular}
\label{tab:point1}
}
\end{table}

\subsection{Discrete Covariate Identification}
This task studies discrete covariate identification for time-series forecasting with continuous-valued targets. Given a continuous target time series and a set of candidate discrete covariates, we use the proposed estimator to quantify the dependence between the target and each covariate from historical data. Covariates are then ranked by their estimated MI, and the most informative ones are selected as input to downstream forecasting models. Normalized Discounted Cumulative Gain (NDCG) is used to assess the quality of covariate rankings produced by different MI estimators. We expect that the larger the MI between the covariate and the target time series, the higher the forecast accuracy. It suggests that the covariate with higher MI could provide more useful information. Ideally, the rank of the estimated MI could serve as a proxy to forecast the accuracy rank. This experiment evaluates MI estimators through their \emph{downstream impact} on forecast performance, as ground-truth MI values are unavailable for real-world datasets. 

\noindent \textbf{Dataset.} 
We evaluate this task using the \textbf{Rossmann Store Sales dataset}\footnote{\small \url{https://www.kaggle.com/competitions/rossmann-store-sales}} and the \textbf{M5 Forecasting dataset}\footnote{\small \url{https://www.kaggle.com/competitions/m5-forecasting-accuracy}}. The Rossmann dataset contains store-level daily sales time series with associated discrete covariates such as promotions and holidays (Figure~\ref{fig:covariate_selection1}). The M5 dataset consists of daily unit sales for Walmart products, along with categorical attributes such as product, department, and calendar-related variables. In both datasets, the goal is to identify informative discrete covariates for forecasting continuous daily sales values. We randomly sample 100 time series from each dataset. Dataset details are provided in the Appendix~\ref{app:forecasting}.

\begin{figure}[h!]
\centering
\includegraphics[width=0.6\columnwidth, height=0.16\textheight]{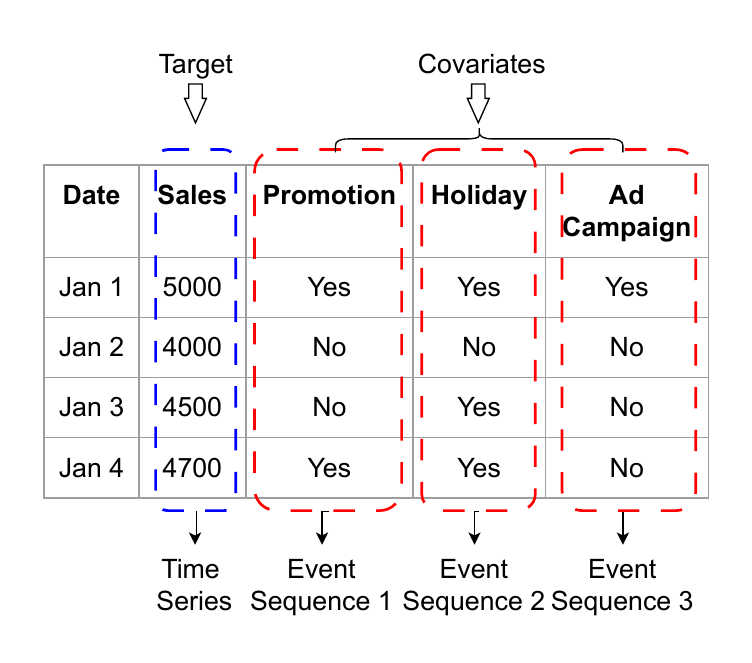}
\captionsetup{font=small}
\caption{Illustration of discrete covariate identification for forecasting. The sales time series ({\color{blue}{blue}}) is the prediction target, and each discrete covariate ({\color{red}{red}}) is modeled as a temporal event sequence.}
\label{fig:covariate_selection1}
\end{figure}

\noindent \textbf{Baselines \& Forecasting Methods.} We compare our proposed estimator with \textbf{Ross}~\cite{ross2014mutual} and \textbf{Mixture}~\cite{gao2017estimating} for MI estimation and covariate ranking. The selected covariates are then used as inputs to three representative forecasting models: CatBoostRegressor~\cite{prokhorenkova2018catboost}, DeepAR~\cite{salinas2020deepar},  FMTimeSeries~\cite{das2024a}, and Chronos-2~\cite{ansari2025chronos}. We also evaluate an ablated variant of our method without event clustering, denoted as \textbf{Ours}, and a full variant with clustering, denoted as \textbf{Ours-Cluster}. This comparison isolates the effect of grouping correlated discrete values into latent categories during covariate selection.

\noindent \textbf{Results.}Table~\ref{tab:timeFM} reports forecasting performance measured by mean NDCG over 100 sampled time series and the number of times that each method achieves the highest NDCG. Since higher MI indicates stronger and more reliable dependence between covariates and the target series, more accurate MI estimation is expected to yield better covariate selection and better forecasting performance.

Across both datasets and all models, our proposed method consistently outperforms Ross and Mixture in terms of mean NDCG and the number of winning rounds. This improvement is observed consistently across tree-based (CatBoost), probabilistic (DeepAR), and foundation-model-based (FMTimeSeries and Chronos-2) approaches, indicating that the benefit arises from improved covariate selection rather than from model-specific effects. Moreover, \textbf{Ours-Cluster} achieves the highest mean NDCG and the most winning rounds in nearly all settings. This result highlights the importance of modeling correlations and redundancies among discrete covariates by grouping them into latent categories, further improving the stability and effectiveness of MI-based covariate selection.

\begin{table}[h!]
\centering
\captionsetup{font=small}
\caption{Results of covariate selection for  forecasting. Higher NDCG indicates better MI estimation. Values in (parentheses) denote the number of winning rounds (highest NDCG) out of 100 samples.}
\label{tab:timeFM}
\resizebox{\columnwidth}{!}{
\begin{tabular}{l|c|c|c|c|c}
\toprule
Model                              & Benchmark & Ross & Mixure & Ours & \textbf{Ours-Cluster}                          \\ \midrule
\multirow{2}{*}{CatBoostRegressor} & Rossmann  & 0.85 (4)    & 0.75 (5)    & 0.94 (29)     & \textbf{0.95} (62) \\ 
                                   & M5        & 0.75 (17)   & 0.62 (5)    & 0.81 (15)      & \textbf{0.86} (63) \\ \midrule
\multirow{2}{*}{DeepAR}            & Rossmann  & 0.78 (2)    & 0.76 (10)    & 0.90 (23)     & \textbf{0.92} (65) \\ 
                                   & M5        & 0.71 (17)   & 0.54 (4)    & 0.76 (3)      & \textbf{0.82} (76) \\ \midrule
\multirow{2}{*}{FMTimeSeries}      & Rossmann  & 0.82 (0)    & 0.92 (28)   & 0.93 (0)      & \textbf{0.96} (72) \\ 
                                   & M5        & 0.88 (27)   & 0.74 (18)   & 0.89 (8)     & \textbf{0.92} (47) \\
                                   \midrule
\multirow{2}{*}{Chronos-2}      & Rossmann  & 0.72 (0)    & 0.77 (2)   & 0.92 (1)      & \textbf{0.95} (97) \\ 

                                   & M5        & 0.70 (9)	   & 0.59 (7)   & 0.78 (11)     & \textbf{0.83} (73) \\ \bottomrule
\end{tabular}
}
\end{table}

\subsection{Continuous Feature Selection} 
We further evaluate our estimator on continuous feature selection for mixed-type tabular data, where input features are continuous variables and the target label is discrete. Given a continuous feature, we treat its values across all instances as samples from a continuous random variable, and the corresponding class labels as a discrete random variable. This setting corresponds to the canonical use case of   \textbf{Ross}~\cite{ross2014mutual} and \textbf{Mixture}~\cite{gao2017estimating}.

Our proposed estimator directly computes the MI between a continuous feature and a discrete label without distributional assumptions or ad hoc discretization. Although originally motivated by temporal data, our formulation naturally generalizes to tabular settings by aggregating entropy with permutation-invariant operators (i.e., summation and integration), making feature relevance independent of instance ordering. Conceptually, this corresponds to modeling feature values as samples drawn from an underlying stochastic process, an assumption that is equally valid for both time series and i.i.d.\ tabular data.

Based on this formulation, we define the feature importance (FI) between a continuous feature variable $F$ and a discrete label variable $L$ as
{\small
$
    \text{FI}(L, F) = \sum_{l \in \mathcal{L}}\int_{f\in \mathcal{F}}P(l,f)\log \left(\frac{P(l, f)}{P(l)P(f)}\right) \;df.
$
}

\noindent \textbf{Dataset.} We evaluate feature selection performance using datasets from a widely adopted benchmark~\cite{li2018feature}, which includes multiple classification datasets with continuous features and discrete class labels across diverse domains such as image analysis, biological data, and spoken letter recognition. Additional dataset details are provided in the Appendix~\ref{app:feature_selection}.

\noindent \textbf{Baselines \& Classifiers.} We compare our estimator with \textbf{Ross}~\cite{ross2014mutual} and \textbf{Mixture}~\cite{gao2017estimating} for MI-based feature selection. For all methods, the top-$k$ selected features are used as inputs to three standard classifiers: Random Forest (RF), Support Vector Machine (SVM), and Logistic Regression (LR).

\noindent \textbf{Results.} We evaluate feature relevance through downstream classification performance. Detailed classification accuracies using the top $k$ selected features are reported in Detailed classification results for individual datasets and feature budgets are reported in Tables~\ref{tab:compare_feature} and  \ref{tab:compare_feature1} in the Appendix. Table~\ref{tab:win_tie_lose} summarizes performance over a total of 120 evaluation cases, spanning all datasets, classifiers, and feature budgets. In each case, a \emph{win} indicates that features selected by our method yield higher classification accuracy than those selected by a baseline method for the same classifier and dataset; \emph{tie} and \emph{loss} are defined analogously. Across all settings, this aggregated comparison highlights the consistency of our method relative to existing approaches. Our proposed measurement consistently outperforms existing MI-based methods, indicating more accurate quantification of dependence between continuous features and discrete labels. Notably, the optional clustering strategy further improves performance by grouping redundant or highly overlapping feature-value distributions into latent categories, reducing estimation noise and enhancing feature ranking stability.

                              

\begin{table}[ht!]
\centering
\captionsetup{font=small}
\caption{ The count of wins, ties, and loses across 120 settings.}
\label{tab:win_tie_lose}
 \resizebox{0.58\columnwidth}{!}{%
\begin{tabular}{l|c|c|c}
\toprule
Method Comparison & Win & Tie  & Lose  \\ \midrule
Ours VS. Ross      & 89        & 14        & 17  \\ \midrule
Ours VS. Mixtures  & 93   & 15   &12\\ \midrule
W/ cluster VS. W/O cluster  &77   &20   &23 \\ \bottomrule
\end{tabular}
}
\end{table}

\section{Discussion}
We introduced a nonparametric mutual-information–based measurement for directly quantifying interactions between heterogeneous temporal data, specifically continuous time series and discrete event sequences. Unlike prior approaches, the proposed estimator operates natively on mixed data types without requiring discretization, distributional assumptions, learned models, or ad hoc data transformations. This design allows the measurement to remain simple, interpretable, and theoretically grounded while addressing practical issues such as finite precision, repeated values, and correlated events that commonly arise in real-world temporal data. Across a diverse set of fundamental analysis tasks, our experiments demonstrate that the proposed measurement consistently achieves the intended analytical objectives and often outperforms existing discrete–continuous MI estimators and classical correlation-based methods. These results suggest that the estimator can serve as a reliable, task-agnostic measure of dependence between heterogeneous temporal variables, analogous in spirit to Pearson correlation but applicable to mixed data types and nonlinear relationships. Future work includes extending the estimator to better accommodate application-specific noise characteristics and integrating it with learning-based frameworks that can further improve robustness under severe noise or sparsity.


\begin{acks}
We thank the anonymous reviewers for their constructive feedback. Specifically, thanks to Junhao Zhu for the insightful discussion. 
This work has been funded in part by the NIH award R01LM014026.
\end{acks}


\bibliographystyle{ACM-Reference-Format}
\bibliography{reference}

\appendix
\section{Proof of the Measurement's Non-negativity}
\label{app:non-negative}

The classical Shannon mutual information (i.e., the mutual information between two discrete variables) is non-negative, and the mutual information between two continuous variables is also proved to be non-negative~\cite {nagel2024accurate}. We provide the corresponding proof for the mutual information between a continuous variable and a discrete variable. This is the foundation of the forced non-negativity operation in our estimated mutual information.

\begin{theorem}
\label{theorom:non-negative}
The mutual information between a continuous variable and a discrete variable is non-negative. 
\end{theorem}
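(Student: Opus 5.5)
The plan is to write the mutual information of Eq.~\ref{eq:mi-mix} as an expectation of a log-ratio and apply Jensen's inequality to the concave function $\log$. Equivalently, we show that it is a Kullback--Leibler divergence between the joint law and the product of marginals.

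Let $X$ be discrete with pmf $p(x)$ on $\mathcal{X}$, and let $Y$ have conditional densities $P(y\mid x)$. The joint "density" is $P(x,y)=p(x)P(y\mid x)$ with respect to counting measure times Lebesgue measure, and it integrates to one. The marginal density is $P(y)=\sum_{x}p(x)P(y\mid x)$. Restrict attention to the support $S=\{(x,y):P(x,y)>0\}$; on $S$ we have $p(x)>0$ and $P(y)>0$, so the ratio is well defined. Then
\begin{align*}
-\I(X,Y) = \sum_{x\in\mathcal{X}}\int P(x,y)\log\frac{p(x)P(y)}{P(x,y)}\,\D y
\le \log\sum_{x\in\mathcal{X}}\int_{S_x} p(x)P(y)\,\D y \le \log\sum_{x}p(x)\int P(y)\,\D y = \log 1 = 0 .
\end{align*}
The first inequality is Jensen's inequality for the probability measure $P(x,y)$ on $S$. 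Here $S_x$ denotes the $x$-section of $S$. The second inequality uses that dropping the restriction to $S$ only enlarges a nonnegative integral. Hence $\I(X,Y)\ge 0$. Equality holds if and only if $P(y\mid x)=P(y)$ almost everywhere for every $x$ with $p(x)>0$, that is, when $X$ and $Y$ are independent.

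The main technical point is justifying Jensen's inequality when the integrand may be unbounded. I would handle this by invoking the elementary inequality $\log t\le t-1$ pointwise, which avoids integrability concerns about the negative part. This gives
\[
\sum_x\int P(x,y)\log\frac{p(x)P(y)}{P(x,y)}\,\D y\le \sum_x\int_{S_x}\bigl(p(x)P(y)-P(x,y)\bigr)\,\D y\le 1-1=0,
\]
assuming the integral defining $\I$ exists, as implicitly assumed in Eq.~\ref{eq:mi-mix}.

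I would also remark two consequences. First, the same argument applies to the mixture decomposition of Eq.~\ref{eq:mi_cont}, since the combined variable still has a joint law relative to a product reference measure. Second, any negative estimate must come from estimation error rather than from the quantity itself, which justifies the truncation $I\leftarrow\max(I,0)$. Finally, the argument does not depend on the sign of the differential entropies $H(Y)$ and $H(Y\mid X)$, which may be negative. Only their difference appears, and that difference is a divergence.
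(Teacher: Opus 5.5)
Your proof is correct, and it takes a different route from the paper's. You identify $\I(X,Y)$ directly as the divergence $D(P_{XY}\,\|\,P_X\otimes P_Y)$ and bound it via Jensen or, more robustly, via the pointwise inequality $\log t\le t-1$. This uses no information about the sign of any differential entropy, and it gives the equality case. It also makes your existence caveat unnecessary: the positive part of the integrand of $-\I$ is dominated by $p(x)P(y)$, so $\I$ is always well defined in $[0,\infty]$.

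The paper instead writes $\I=H(X)+\hat H(Y)-\hat H(X,Y)$, with entropies taken relative to an invariant measure $m(y)$. It checks that the $\log m(y)$ terms cancel, so the expression reduces to $H(X)+H(Y)-H(X,Y)$. It then infers nonnegativity from the asserted orderings $\hat H(Y)\ge 0$ and $\min(H(X),\hat H(Y))\ge\hat H(X,Y)\ge 0$. What that computation genuinely establishes is that MI does not depend on the reference measure, which is the point of the paper's discussion of negative differential entropy. The premise $\hat H(Y)\ge\hat H(X,Y)$, however, fails in general. In the convention of the paper's displayed chain, $\hat H(X,Y)-\hat H(Y)=-\sum_x\int P(x,y)\log P(x\mid y)\,\D y=H(X\mid Y)\ge 0$, so that premise forces $X$ to be a.s.\ a function of $Y$. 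Your divergence argument is therefore the one that actually carries the theorem.

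One caution about your first side remark. The same argument does certify the population MI between $X$ and the mixed continuous--discrete $Y$: take densities with respect to counting measure times (Lebesgue measure plus counting measure on the atoms). It does not certify the expression $a\I_A+b\I_B+a\ln a+b\ln b$ of Eq.~\ref{eq:mi_cont} as written. Let $Z$ be the component indicator, which is a.s.\ a function of $Y$. The chain rule then gives $\I(X,Y)=a\I_A+b\I_B+\I(X,Z)$, whereas Eq.~\ref{eq:mi_cont} has $a\ln a+b\ln b=-H(Z)$ in that slot. For $X$ independent of $Y$ and $0<a<1$, that expression is strictly negative. So for the paper's estimator a negative output need not be pure estimation noise, and the truncation is not justified by the theorem alone.
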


\noindent
\textbf{Proof:}

As $H(X)\geq 0$, $\hat{H}(Y)\geq 0$ and $\small \min(H(X),\hat{H}(Y))\geq \hat{H}(X,Y)\geq 0$, $\small H(X) + \hat{H}(Y) - \hat{H}(X, Y)\geq 0$, 
{\small
\begin{align}
\I(X, Y) &= H(X) + \hat{H}(Y) - \hat{H}(X, Y)\\
&= -\sum_{x\in \mathcal{X}} P(x)\log P(x) - \int_{\mathcal{Y}}P(y)\log \frac{P(y)}{m(y)}\D y \\ 
&+ \sum_{x\in \mathcal{X}} \int_{\mathcal{Y}}P(x,y)\log \frac{P(x,y)}{m(y)} \D y \\
&= -\sum_{x\in \mathcal{X}} P(x)\log P(x) - \int_{\mathcal{Y}}P(y)\log P(y)\D y \\
& + \int_{\mathcal{Y}}P(y)\log m(y)\D y  
 +\sum_{x\in \mathcal{X}} \int_{\mathcal{Y}}P(x,y)\log P(x,y)\D y \\
 & - \sum_{x\in \mathcal{X}} \int_{\mathcal{Y}}P(x,y)\log m(y) \D y \\
&= H(X) + H(Y) - H(X, Y) \geq 0
\label{eq:mi_relative}
\end{align}
}

\section{Theoretical Analysis Discussion}

Existing theoretical analysis on the estimator, like consistency analysis and finite-sample bounds, usually relies on the assumption of a large sample size. However, real-world time series have limited length, e.g., tens to thousands. Those bounds, in the worst case, are loose and lack practical guidance. In addition, consistency analysis for clustering is an open challenge~\cite{blanchard2025consistency}. Therefore, we rely on empirical evaluation using real-world data rather than error bound analysis. Therefore, we rely on diverse empirical evaluations from real-world data.

\section{Mutual Information Estimation Algorithm}
\label{app:continuous_estimation}

Our estimation algorithm could be formalized as Algorithm~\ref{alg:mi}. The algorithm to estimate the continuous entropy is Algorithm~\ref{app:algo}.

\begin{algorithm}
\caption{Normalized mutual information calculation\\\textbf{Input}: A temporal event sequence $X$ and a time series $Y$.} 
\label{alg:mi}
\begin{algorithmic}[1]
\small
\State Get the symbol set $X_{set}$ from $X$
\State Assign all values aligning with a specific temporal event $e$ over time by group[e] = [list contains continuous values] 
\If {correlation among events is observed}
\State Use $hierarchical\_clustering$($group$, $num\_clusters$) to remap events to latent cluster events and generate $X_{new}$
\State $X = X_{new}$ 
\EndIf
\State Partition $(X, Y)$ based on repeated values into $(X_A,Y_A)$ and $(X_B, Y_B)$.
\State $a =  \frac{\|(X_A,Y_A)\|}{\|(X, Y)\|}$, $b =  \frac{\|(X_B,Y_B)\|}{\|(X, Y)\|}$
\State $\I_A = continuous\_entropy(Y_A)$
\For {each event symbol $e$ in $X_{set}$}
\State $\I_A -= \frac{\len(group[e])}{\len(Y)} continuous\_entropy(group_A[e])$
\EndFor
\State Calculate  $\I_B$ with Eq.~\ref{eq:mi-dis}.
\State $\I = a\cdot \I_A+b\cdot \I_B + a\cdot ln(a) + b\cdot ln(b)$
\State Clip the negative estimation $\I=\max(\I, 0)$
\State Calculate $\NMI(X,Y)$ (with Eq.~\ref{eq:nomralization})
\State \textbf{return} $\NMI(X,Y)$
\Statex 
\end{algorithmic} 
\end{algorithm}

\begin{algorithm}
\caption{continuous\_entropy\\
\textbf{Input}: A time series as variable $Y$}
\label{app:algo}
\begin{algorithmic}[1]
\small

\State Sort all values in $Y$ and get $sorted\_Y$
\State Get the difference between all the consecutive pairs in $sorted\_Y$
\State Initialize an array $\rho$, each element corresponds to a value $v_i$ in $sorted\_Y$

\For{each $v_i \in sorted\_Y$}
  \State Calculate the nearest neighbor distance:
  \State $\rho_i = \min(v_i - v_{i-1},\, v_{i+1} - v_i)$
\EndFor

\State $sum \gets 0$
\For{each $\rho_i$}
  \State $sum = sum + \log(\rho_i)$
\EndFor

\State $mean\_\rho = \frac{sum}{|\rho|}$
\State $H(Y) = mean\_\rho + \ln(c) + \ln(\gamma) + \ln(|\rho| - 1)$
\State \textbf{return} $H(Y)$
\Statex
\end{algorithmic}
\end{algorithm}


\section{Time-Delayed Mutual Information in Different Settings}

The synthetic data distribution is visualized in Fig.~\ref{fig:covariate_selection}.

\begin{figure}
\centering
\includegraphics[width=0.63\columnwidth, height=0.14\textheight]{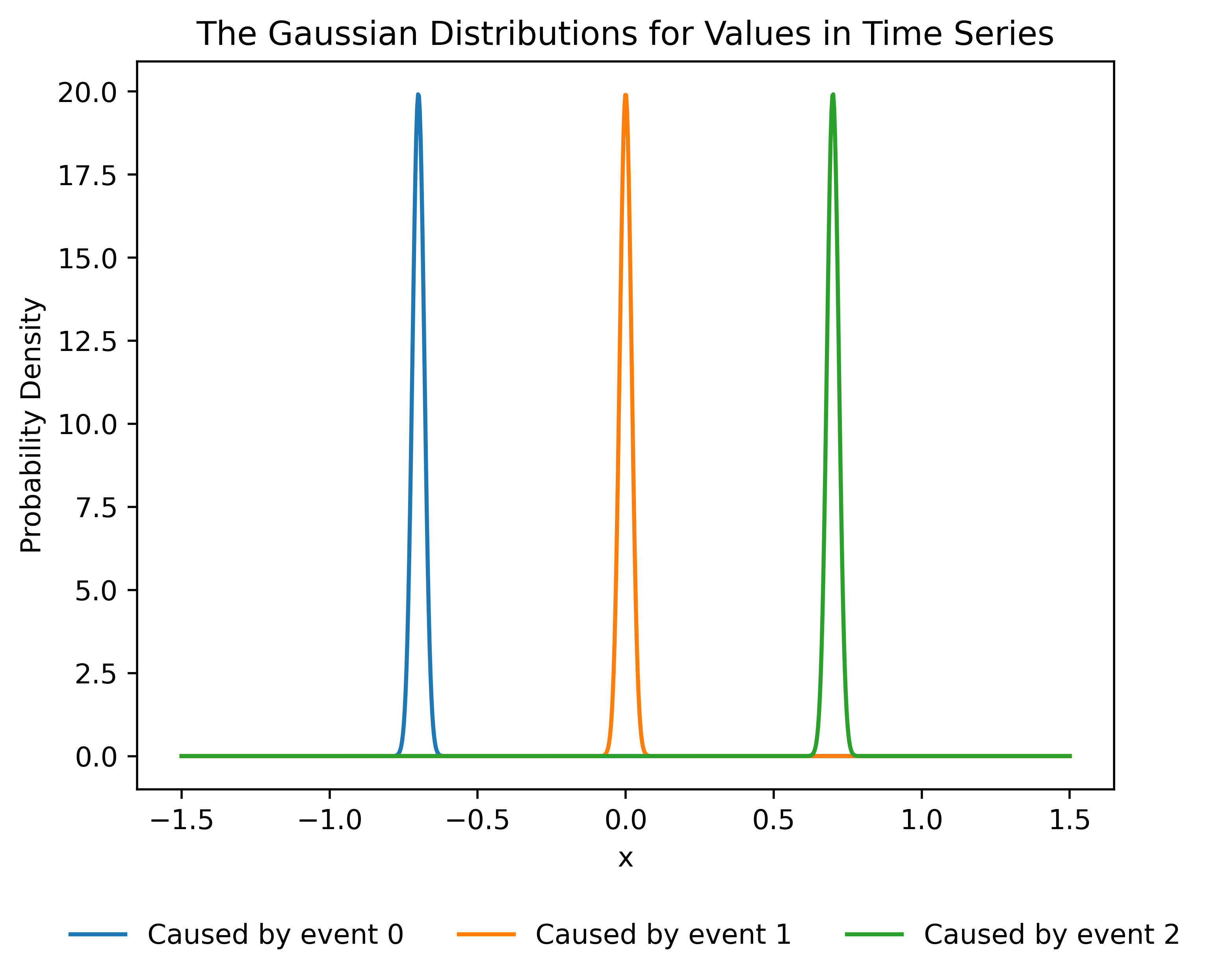}
\captionsetup{font=small}
  \caption{ The Gaussian distributions for time series.}
\label{fig:covariate_selection}
\end{figure}

Table~\ref{tab:tdmi} shows the results when the decimal place number is 3 and the time series length is 10,000. To better demonstrate the robust performance of the proposed method, we compare the proposed measurement with the Ross and Mixture methods across different digit precision settings (Table~\ref{tab:tdmi_digitnum}) and sample numbers (Table~\ref{tab:tdmi_sample_num}).

Table~\ref{tab:tdmi_digitnum} suggested that the proposed measurement can achieve the best accuracy consistently with different digit numbers. This demonstrates the robustness of our measurement.

Table~\ref{tab:tdmi_sample_num} suggested that the proposed measurement could consistently perform well even with a few samples.

\begin{table}[]
\centering
\captionsetup{font=small}
\caption{ The comparison with 10,000 time step samples under different decimal places precision. Only report the mean square error (rounded to 5 decimal places) for simplicity.}
\label{tab:tdmi_digitnum}
\resizebox{1\columnwidth}{!}{%
\begin{tabular}{|l|ccc|ccc|ccc|}
\hline
  & \multicolumn{3}{c|}{Digit Num. = 2}                                                 & \multicolumn{3}{c|}{Digit Num. = 3}                                                        & \multicolumn{3}{c|}{Digit Num. = 4}                                                        \\ \hline
 $\tau$ & \multicolumn{1}{c|}{Ro. MSE}         & \multicolumn{1}{c|}{Mi. MSE}    & Pr. MSE    & \multicolumn{1}{c|}{Ro. MSE}         & \multicolumn{1}{c|}{Mi. MSE}           & Pr. MSE    & \multicolumn{1}{c|}{Ro. MSE} & \multicolumn{1}{c|}{Mi. MSE} & \multicolumn{1}{c|}{Pr. MSE} \\ \hline
0 & \multicolumn{1}{c|}{29.993}          & \multicolumn{1}{c|}{0}          & 0          & \multicolumn{1}{c|}{10.254}          & \multicolumn{1}{c|}{4.00E-05}          & 1.00E-05   & \multicolumn{1}{c|}{1.1882}  & \multicolumn{1}{l|}{0.1270}  & 0.0468                       \\ \hline
1 & \multicolumn{1}{c|}{30.000}          & \multicolumn{1}{c|}{0}          & 0          & \multicolumn{1}{c|}{10.279}          & \multicolumn{1}{c|}{0.00013}           & 5.00E-05   & \multicolumn{1}{c|}{1.1649}  & \multicolumn{1}{l|}{0.1374}  & 0.0482                       \\ \hline
2 & \multicolumn{1}{c|}{30.002}          & \multicolumn{1}{c|}{0}          & 0          & \multicolumn{1}{c|}{10.262}          & \multicolumn{1}{c|}{6.00E-05}          & 1.00E-05   & \multicolumn{1}{c|}{1.1892}  & \multicolumn{1}{l|}{0.1373}  & 0.0474                       \\ \hline
3 & \multicolumn{1}{c|}{29.990}          & \multicolumn{1}{c|}{0}          & 0          & \multicolumn{1}{c|}{10.247}          & \multicolumn{1}{c|}{3.00E-05}          & 0          & \multicolumn{1}{c|}{1.1834}  & \multicolumn{1}{l|}{0.1373}  & 0.0488                       \\ \hline
4 & \multicolumn{1}{c|}{29.992}          & \multicolumn{1}{c|}{0}          & 0          & \multicolumn{1}{c|}{10.241}          & \multicolumn{1}{c|}{4.00E-05}          & 0          & \multicolumn{1}{c|}{1.2047}  & \multicolumn{1}{l|}{0.1364}  & 0.0450                       \\ \hline
5 & \multicolumn{1}{c|}{\textbf{43.153}} & \multicolumn{1}{c|}{\textbf{0}} & \textbf{0} & \multicolumn{1}{c|}{\textbf{18.022}} & \multicolumn{1}{c|}{\textbf{9.00E-05}} & \textbf{0} & \multicolumn{1}{c|}{3.2257}  & \multicolumn{1}{l|}{0.0293}  & 0                            \\ \hline
6 & \multicolumn{1}{c|}{30.003}          & \multicolumn{1}{c|}{0}          & 0          & \multicolumn{1}{c|}{10.267}          & \multicolumn{1}{c|}{0.00012}           & 2.00E-05   & \multicolumn{1}{c|}{1.1939}  & \multicolumn{1}{l|}{0.1443}  & 0.0456                       \\ \hline
7 & \multicolumn{1}{c|}{29.994}          & \multicolumn{1}{c|}{0}          & 0          & \multicolumn{1}{c|}{10.263}          & \multicolumn{1}{c|}{5.00E-05}          & 0          & \multicolumn{1}{c|}{1.1689}  & \multicolumn{1}{l|}{0.1401}  & 0.0477                       \\ \hline
8 & \multicolumn{1}{c|}{30.000}          & \multicolumn{1}{c|}{0}          & 0          & \multicolumn{1}{c|}{10.272}          & \multicolumn{1}{c|}{7.00E-05}          & 1.00E-05   & \multicolumn{1}{c|}{1.2040}  & \multicolumn{1}{l|}{0.1362}  & 0.0445                       \\ \hline
9 & \multicolumn{1}{c|}{29.993}          & \multicolumn{1}{c|}{0}          & 0          & \multicolumn{1}{c|}{10.263}          & \multicolumn{1}{c|}{6.00E-05}          & 1.00E-05   & \multicolumn{1}{c|}{1.2072}  & \multicolumn{1}{l|}{0.1313}  & 0.045                        \\ \hline
\end{tabular}
}
\end{table}

\begin{table}[]
\centering
\captionsetup{font=small}
\caption{ The comparison with time series in two decimal places under sample time step numbers. Only report the mean square error (rounded to 5 decimal places) for simplicity.}
\label{tab:tdmi_sample_num}
\resizebox{1\columnwidth}{!}{%
\begin{tabular}{|l|ccc|ccc|ccc|}
\hline
  & \multicolumn{3}{c|}{Time Step Num. = 100}                                                                    & \multicolumn{3}{c|}{Time Step Num. = 1000}                                                                   & \multicolumn{3}{c|}{Time Step Num. = 10000}                                            \\ \hline
  & \multicolumn{1}{c|}{Ro. MSE}        & \multicolumn{1}{c|}{Mi. MSE}         & \multicolumn{1}{c|}{Pr. MSE} & \multicolumn{1}{c|}{Ro. MSE}        & \multicolumn{1}{c|}{Mi. MSE}         & \multicolumn{1}{c|}{Pr. MSE} & \multicolumn{1}{c|}{Ro. MSE}         & \multicolumn{1}{c|}{Mi. MSE}    & Pr. MSE    \\ \hline
0 & \multicolumn{1}{l|}{1.152}          & \multicolumn{1}{l|}{0.1319}          & 0.0153                       & \multicolumn{1}{l|}{1.182}          & \multicolumn{1}{l|}{0.1741}          & 0.0556                       & \multicolumn{1}{c|}{29.993}          & \multicolumn{1}{c|}{0}          & 0          \\ \hline
1 & \multicolumn{1}{l|}{1.234}          & \multicolumn{1}{l|}{0.0844}          & 0.04248                      & \multicolumn{1}{l|}{1.156}          & \multicolumn{1}{l|}{0.1691}          & 0.0679                       & \multicolumn{1}{c|}{30.000}          & \multicolumn{1}{c|}{0}          & 0          \\ \hline
2 & \multicolumn{1}{l|}{1.265}          & \multicolumn{1}{l|}{0.1480}          & 0.00169                      & \multicolumn{1}{l|}{1.159}          & \multicolumn{1}{l|}{0.1857}          & 0.0756                       & \multicolumn{1}{c|}{30.002}          & \multicolumn{1}{c|}{0}          & 0          \\ \hline
3 & \multicolumn{1}{l|}{1.123}          & \multicolumn{1}{l|}{0.1410}          & 0.01668                      & \multicolumn{1}{l|}{1.198}          & \multicolumn{1}{l|}{0.1628}          & 0.0600                       & \multicolumn{1}{c|}{29.990}          & \multicolumn{1}{c|}{0}          & 0          \\ \hline
4 & \multicolumn{1}{l|}{1.106}          & \multicolumn{1}{l|}{0.1871}          & 0.01358                      & \multicolumn{1}{l|}{1.146}          & \multicolumn{1}{l|}{0.1836}          & 0.0843                       & \multicolumn{1}{c|}{29.992}          & \multicolumn{1}{c|}{0}          & 0          \\ \hline
5 & \multicolumn{1}{l|}{\textbf{3.320}} & \multicolumn{1}{l|}{\textbf{0.0722}} & \textbf{0.00047}             & \multicolumn{1}{l|}{\textbf{3.317}} & \multicolumn{1}{l|}{\textbf{0.0371}} & \textbf{0}                   & \multicolumn{1}{c|}{\textbf{43.153}} & \multicolumn{1}{c|}{\textbf{0}} & \textbf{0} \\ \hline
6 & \multicolumn{1}{l|}{1.054}          & \multicolumn{1}{l|}{0.1061}          & 0.00472                      & \multicolumn{1}{l|}{1.131}          & \multicolumn{1}{l|}{0.1768}          & 0.0667                       & \multicolumn{1}{c|}{30.003}          & \multicolumn{1}{c|}{0}          & 0          \\ \hline
7 & \multicolumn{1}{l|}{0.956}          & \multicolumn{1}{l|}{0.0622}          & 0.07345                      & \multicolumn{1}{l|}{1.108}          & \multicolumn{1}{l|}{0.1727}          & 0.0635                       & \multicolumn{1}{c|}{29.994}          & \multicolumn{1}{c|}{0}          & 0          \\ \hline
8 & \multicolumn{1}{l|}{1.179}          & \multicolumn{1}{l|}{0.1852}          & 0.02533                      & \multicolumn{1}{l|}{1.175}          & \multicolumn{1}{l|}{0.1829}          & 0.0553                       & \multicolumn{1}{c|}{30.000}          & \multicolumn{1}{c|}{0}          & 0          \\ \hline
9 & \multicolumn{1}{l|}{0.987}          & \multicolumn{1}{l|}{0.2169}          & 0.0116                       & \multicolumn{1}{l|}{1.078}          & \multicolumn{1}{l|}{0.1692}          & 0.0696                       & \multicolumn{1}{c|}{29.993}          & \multicolumn{1}{c|}{0}          & 0          \\ \hline
\end{tabular}
}
\end{table}


\section{Cut Threshold in Hierarchical Clustering}

The classical agglomerative clustering method~\cite{pedregosa2011scikit} has a distance threshold parameter defining the linkage distance at which the clustering process stops merging clusters. We only use the clustering strategy in the covariate selection and feature selection tasks. We fix this threshold to 0.9 for all experiments for simplicity. We also illustrate the sensitivity of this parameter in Figure~\ref{fig:clustering_threshold} for the discrete covariate selection task in the M5 dataset. 
\begin{figure}[]
    \centering
    \subfloat{
        \includegraphics[width=0.28\textwidth]{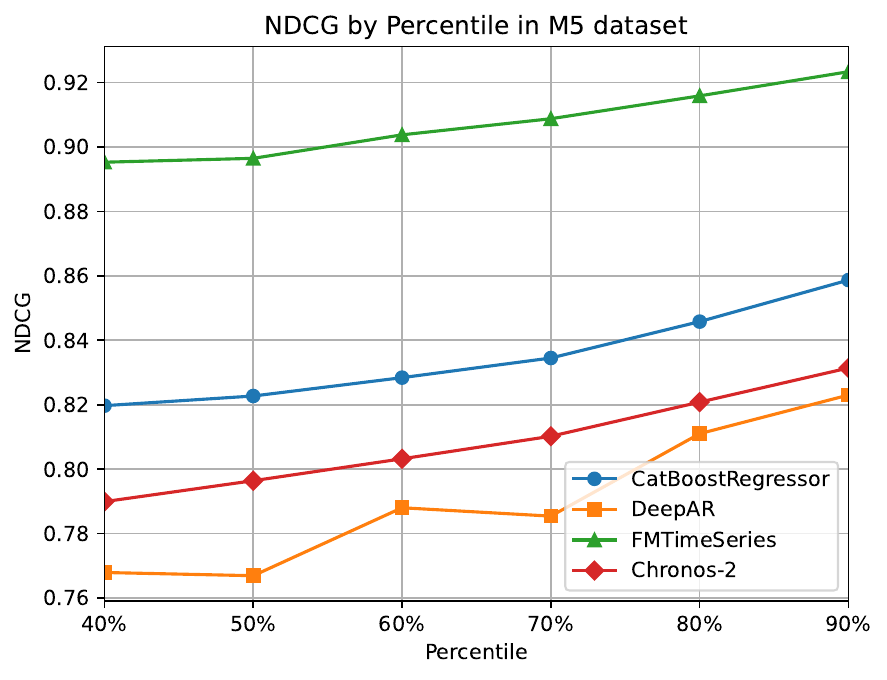}
        \label{fig:subfig4}
    }
    \\
    \subfloat{
        \includegraphics[width=0.28\textwidth]{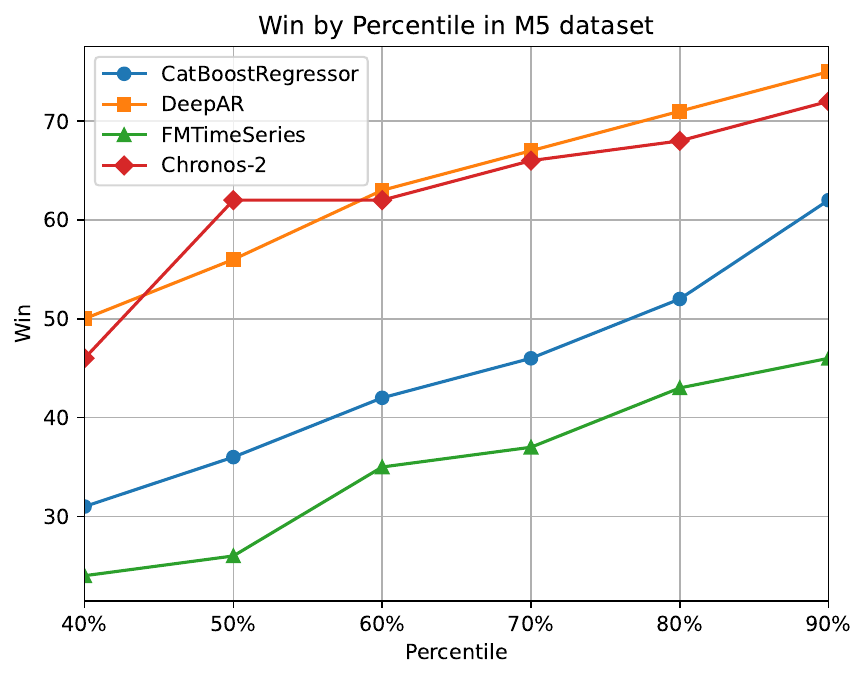}
        \label{fig:subfig5}
    }
    \captionsetup{font=small}
    \caption{Distance Threshold Sensitivity}
    \label{fig:clustering_threshold}
\end{figure}

\section{Setting for Discrete Covariate Selection }
\label{app:forecasting}



For the Rossmann Store Sales dataset, the goal is to forecast the daily sales number time series from Rossmann drug stores with the following discrete covariates:
\begin{itemize}
    \item{Open} - 0 = store is closed, 1 = store is open.
    \item{DayOfWeek} - an indicator for the day of a week.
    \item{Promo} - indicates if a store is running a promo.
    \item{StateHoliday} - indicates a state holiday. Normally, all stores are closed on state holidays. Note that all schools are closed on public holidays and weekends. a = public holiday, b = Easter holiday, c = Christmas, 0 = None.
\end{itemize}

For the M5 dataset, the goal is to forecast the daily sales number time series of one retail good with the following discrete covariates:
\begin{itemize}
    \item{Weekday} - an indicator for the day of the week.
    \item{Event name 1} - indicates the first event occurs on a specific day. It has various events.
    \item{Event type 1} - indicates the event types of the first event.
    \item{Event name 2} - indicates the second event occurs on a specific day. 
    \item{Event type 2} - indicates the event types of the second event.
    \item{Snap CA} - indicates whether the stores of CA allow SNAP purchases on the date examined. 1 indicates allowed. 
    \item{Snap TX} - indicates whether the stores of TX allow SNAP purchases on the date examined. 1 indicates allowed.
    \item{Snap WI} - indicates whether the stores of WI allow SNAP purchases on the date examined. 1 indicates allowed.
\end{itemize}

\section{Setting and Results for Feature Selection}
\label{app:feature_selection}

A toy example of feature selection could be found in Fig.~\ref{fig:feature_selection}.

\begin{figure}[h]
\centering
\includegraphics[width=0.72\columnwidth, height=0.17\textheight]{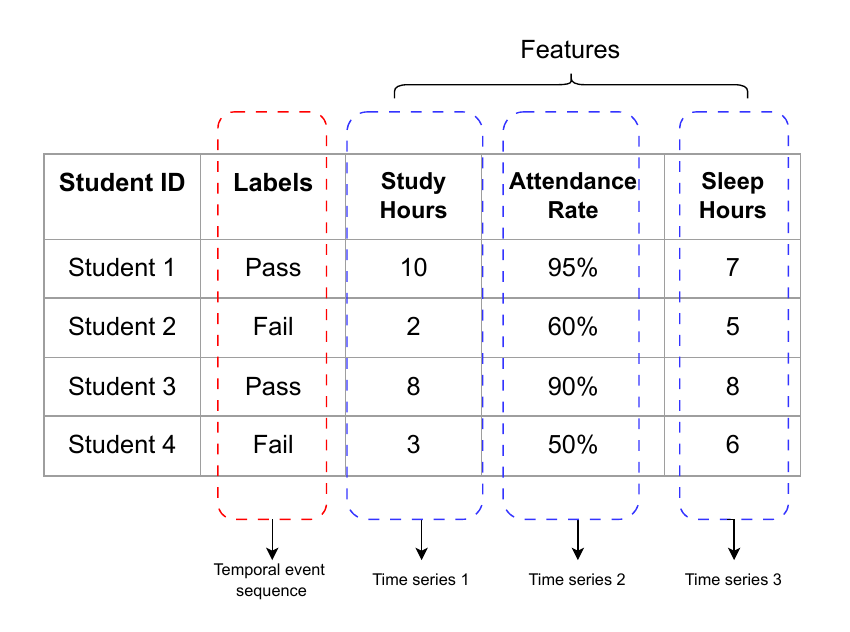}
\captionsetup{font=small}
  \caption{A feature selection example for classifying if a student would fail in the exam. Each feature dimension (blue box) is treated as a time series. The label dimension (red box) is treated as the temporal event sequence. Estimation is between each pair of labels and features (temporal event sequence-time series). The MI can reflect the feature importance.}
\label{fig:feature_selection}
\end{figure}

All the datasets we used are from the widely used benchmark~\cite{li2018feature}, and we selected those with continuous variables as features. The statistics information of the selected datasets is shown in Table~\ref{tab:statistics}.

The detailed experimental results compared with Ross using random forest (RF), support vector machine (SVM), and logistic regression (LR) as the base model are shown in Table~\ref{tab:compare_feature} and~\ref{tab:compare_feature1}.


\begin{table}[H]
\centering
\captionsetup{font=small}
\caption{
Detailed classification accuracy of random forest (RF), support vector machine (SVM), and logistic regression (LR) with top k selected features based on the existing mutual information method and the proposed method.}
\label{tab:compare_feature}
\resizebox{0.9\columnwidth}{!}{
\begin{tabular}{|l|l|c|c|c|c|c|}
\hline
Dataset                    & Method        & k=20   & k=40   & k=60   & k=80   & k=100  \\ \hline
\multirow{12}{*}{COIL20}   & RF Ross:      & 0.9201 & 0.9514 & 0.9688 & 0.9722 & 0.9757 \\ \cline{2-7} 
                           & RF Mixture:   & 0.9306 & 0.9618 & 0.9722 & 0.9792 & 0.9757 \\ \cline{2-7} 
                           & RF NoClu.:    & 0.941  & 0.9757 & 0.9896 & 0.9931 & 0.9896 \\ \cline{2-7} 
                           & RF proposed:  & 0.9965 & 1.0    & 0.9965 & 1.0    & 1.0    \\ \cline{2-7} 
                           & SVM Ross:     & 0.7535 & 0.8576 & 0.9062 & 0.9132 & 0.9271 \\ \cline{2-7} 
                           & SVM Mixture:  & 0.7326 & 0.875  & 0.8958 & 0.9167 & 0.9306 \\ \cline{2-7} 
                           & SVM NoClu:    & 0.8368 & 0.9306 & 0.9549 & 0.9549 & 0.9722 \\ \cline{2-7} 
                           & SVM proposed: & 0.9028 & 0.9514 & 0.9722 & 0.9896 & 0.9896 \\ \cline{2-7} 
                           & LR Ross:      & 0.6528 & 0.7847 & 0.8125 & 0.8472 & 0.8854 \\ \cline{2-7} 
                           & LR Mixture:   & 0.6562 & 0.7847 & 0.8194 & 0.8368 & 0.8889 \\ \cline{2-7} 
                           & LR NoClu:     & 0.7465 & 0.8403 & 0.8889 & 0.8993 & 0.9062 \\ \cline{2-7} 
                           & LR proposed:  & 0.8056 & 0.8889 & 0.934  & 0.9479 & 0.9618 \\ \hline
\multirow{12}{*}{ORL}      & RF Ross:      & 0.6375 & 0.75   & 0.7875 & 0.8125 & 0.875  \\ \cline{2-7} 
                           & RF Mixture:   & 0.575  & 0.7    & 0.775  & 0.8125 & 0.8375 \\ \cline{2-7} 
                           & RF NoClu.:    & 0.75   & 0.8125 & 0.85   & 0.9    & 0.9    \\ \cline{2-7} 
                           & RF proposed:  & 0.7125 & 0.8625 & 0.9375 & 0.9    & 0.9375 \\ \cline{2-7} 
                           & SVM Ross:     & 0.6875 & 0.725  & 0.8125 & 0.9    & 0.925  \\ \cline{2-7} 
                           & SVM Mixture:  & 0.6375 & 0.8125 & 0.875  & 0.8375 & 0.85   \\ \cline{2-7} 
                           & SVM NoClu:    & 0.7    & 0.85   & 0.8375 & 0.9    & 0.9125 \\ \cline{2-7} 
                           & SVM proposed: & 0.7375 & 0.8875 & 0.925  & 0.925  & 0.9375 \\ \cline{2-7} 
                           & LR Ross:      & 0.5125 & 0.6625 & 0.725  & 0.775  & 0.8    \\ \cline{2-7} 
                           & LR Mixture:   & 0.4875 & 0.6875 & 0.75   & 0.7625 & 0.825  \\ \cline{2-7} 
                           & LR NoClu:     & 0.625  & 0.7375 & 0.8    & 0.8125 & 0.8375 \\ \cline{2-7} 
                           & LR proposed:  & 0.675  & 0.8375 & 0.9375 & 0.9375 & 0.95   \\ \hline
\multirow{12}{*}{Yale}     & RF Ross:      & 0.6061 & 0.697  & 0.7273 & 0.7576 & 0.7273 \\ \cline{2-7} 
                           & RF Mixture:   & 0.4242 & 0.6667 & 0.7576 & 0.6667 & 0.697  \\ \cline{2-7} 
                           & RF NoClu.:    & 0.5152 & 0.6667 & 0.6364 & 0.697  & 0.7879 \\ \cline{2-7} 
                           & RF proposed:  & 0.5152 & 0.6364 & 0.7273 & 0.6364 & 0.6364 \\ \cline{2-7} 
                           & SVM Ross:     & 0.4848 & 0.6667 & 0.6364 & 0.6364 & 0.6061 \\ \cline{2-7} 
                           & SVM Mixture:  & 0.4848 & 0.5758 & 0.6061 & 0.5758 & 0.5758 \\ \cline{2-7} 
                           & SVM NoClu:    & 0.4848 & 0.6667 & 0.6364 & 0.5758 & 0.6061 \\ \cline{2-7} 
                           & SVM proposed: & 0.5758 & 0.5455 & 0.6061 & 0.6364 & 0.6364 \\ \cline{2-7} 
                           & LR Ross:      & 0.4848 & 0.6061 & 0.6364 & 0.697  & 0.6364 \\ \cline{2-7} 
                           & LR Mixture:   & 0.5152 & 0.5455 & 0.6364 & 0.6364 & 0.6061 \\ \cline{2-7} 
                           & LR NoClu:     & 0.4242 & 0.697  & 0.6061 & 0.6061 & 0.6364 \\ \cline{2-7} 
                           & LR proposed:  & 0.5152 & 0.6061 & 0.6364 & 0.697  & 0.7576 \\ \hline
\multirow{12}{*}{warpPIE}  & RF Ross:      & 0.8571 & 1.0    & 1.0    & 1.0    & 1.0    \\ \cline{2-7} 
                           & RF Mixture:   & 0.9286 & 1.0    & 1.0    & 1.0    & 1.0    \\ \cline{2-7} 
                           & RF NoClu.:    & 0.9286 & 1.0    & 1.0    & 1.0    & 1.0    \\ \cline{2-7} 
                           & RF proposed:  & 0.9048 & 1.0    & 1.0    & 1.0    & 1.0    \\ \cline{2-7} 
                           & SVM Ross:     & 0.9286 & 0.9524 & 0.9762 & 0.9762 & 0.9762 \\ \cline{2-7} 
                           & SVM Mixture:  & 0.9762 & 1.0    & 1.0    & 1.0    & 1.0    \\ \cline{2-7} 
                           & SVM NoClu:    & 0.9762 & 1.0    & 1.0    & 1.0    & 1.0    \\ \cline{2-7} 
                           & SVM proposed: & 1.0    & 1.0    & 1.0    & 1.0    & 1.0    \\ \cline{2-7} 
                           & LR Ross:      & 0.9524 & 1.0    & 0.9762 & 0.9524 & 0.9762 \\ \cline{2-7} 
                           & LR Mixture:   & 1.0    & 1.0    & 1.0    & 1.0    & 1.0    \\ \cline{2-7} 
                           & LR NoClu:     & 0.9762 & 1.0    & 1.0    & 1.0    & 1.0    \\ \cline{2-7} 
                           & LR proposed:  & 0.9762 & 1.0    & 1.0    & 1.0    & 1.0    \\ \hline
\end{tabular}
}
\end{table}

\begin{table}[h]
\centering
\captionsetup{font=small}
\caption{ Statistics information of datasets for feature selection.}
\label{tab:statistics}
\resizebox{0.8\columnwidth}{!}{%
\begin{tabular}{|l|c|c|c|}
\hline
Dataset           & Num. of  Instances & Num. of  Features & \multicolumn{1}{l|}{Num. of Classes} \\ \hline
COIL20     & 1440               & 1024              & 20                                   \\ \hline
ORL        & 400                & 1024              & 40                                   \\ \hline
Yale       & 165                & 1024              & 15                                   \\ \hline
warpPIE10P & 210                & 2420              & 10                                   \\ \hline
Isolet     & 1560               & 617               & 26                                   \\ \hline
TOX\_171   & 171                & 5748              & 4                                    \\ \hline
USPS       & 9298               & 256               & 10                                   \\ \hline
CLL\_SUB\_111    & 111               & 11340              & 3                                    \\ \hline
\end{tabular}%
}
\end{table}

\begin{table}[H]
\centering
\captionsetup{font=small}
\caption{
Detailed classification accuracy of random forest (RF), support vector machine (SVM), and logistic regression (LR) with top k selected features based on the existing mutual information method and the proposed method.}
\label{tab:compare_feature1}
\resizebox{0.9\columnwidth}{!}{
\begin{tabular}{|l|l|c|c|c|c|c|}
\hline
Dataset                    & Method        & k=20   & k=40   & k=60   & k=80   & k=100  \\ \hline
\multirow{12}{*}{Isolet}   & RF Ross:      & 0.6122 & 0.7404 & 0.7853 & 0.7981 & 0.8109 \\ \cline{2-7} 
                           & RF Mixture:   & 0.6827 & 0.7404 & 0.7917 & 0.8301 & 0.8173 \\ \cline{2-7} 
                           & RF NoClu.:    & 0.4199 & 0.5096 & 0.5897 & 0.7276 & 0.7821 \\ \cline{2-7} 
                           & RF proposed:  & 0.5801 & 0.7436 & 0.8013 & 0.7981 & 0.8237 \\ \cline{2-7} 
                           & SVM Ross:     & 0.6154 & 0.7179 & 0.766  & 0.7917 & 0.8141 \\ \cline{2-7} 
                           & SVM Mixture:  & 0.6538 & 0.7468 & 0.8013 & 0.8333 & 0.8269 \\ \cline{2-7} 
                           & SVM NoClu:    & 0.4615 & 0.4904 & 0.5641 & 0.734  & 0.8269 \\ \cline{2-7} 
                           & SVM proposed: & 0.5705 & 0.7724 & 0.8301 & 0.8526 & 0.8494 \\ \cline{2-7} 
                           & LR Ross:      & 0.5962 & 0.7212 & 0.7724 & 0.8109 & 0.8397 \\ \cline{2-7} 
                           & LR Mixture:   & 0.641  & 0.7628 & 0.7949 & 0.8365 & 0.8558 \\ \cline{2-7} 
                           & LR NoClu:     & 0.4231 & 0.4872 & 0.5321 & 0.7372 & 0.8301 \\ \cline{2-7} 
                           & LR proposed:  & 0.5513 & 0.7532 & 0.8301 & 0.8718 & 0.8718 \\ \hline
\multirow{12}{*}{TOX}      & RF Ross:      & 0.6857 & 0.8286 & 0.8571 & 0.8    & 0.6857 \\ \cline{2-7} 
                           & RF Mixture:   & 0.3714 & 0.4    & 0.4857 & 0.4857 & 0.5714 \\ \cline{2-7} 
                           & RF NoClu.:    & 0.7429 & 0.7429 & 0.7714 & 0.8    & 0.7714 \\ \cline{2-7} 
                           & RF proposed:  & 0.6286 & 0.8857 & 0.8    & 0.7714 & 0.8286 \\ \cline{2-7} 
                           & SVM Ross:     & 0.6    & 0.7143 & 0.7143 & 0.6857 & 0.7143 \\ \cline{2-7} 
                           & SVM Mixture:  & 0.4286 & 0.4    & 0.6    & 0.5429 & 0.6286 \\ \cline{2-7} 
                           & SVM NoClu:    & 0.7714 & 0.7143 & 0.8571 & 0.8    & 0.8    \\ \cline{2-7} 
                           & SVM proposed: & 0.7143 & 0.8571 & 0.8286 & 0.6571 & 0.7429 \\ \cline{2-7} 
                           & LR Ross:      & 0.7143 & 0.7429 & 0.7429 & 0.8286 & 0.8    \\ \cline{2-7} 
                           & LR Mixture:   & 0.4857 & 0.4571 & 0.6571 & 0.6571 & 0.6571 \\ \cline{2-7} 
                           & LR NoClu:     & 0.7429 & 0.7429 & 0.8571 & 0.7714 & 0.8571 \\ \cline{2-7} 
                           & LR proposed:  & 0.6571 & 0.8857 & 0.8    & 0.7429 & 0.8286 \\ \hline
\multirow{12}{*}{USPS}     & RF Ross:      & 0.7129 & 0.7801 & 0.821  & 0.9065 & 0.9199 \\ \cline{2-7} 
                           & RF Mixture:   & 0.7145 & 0.7871 & 0.872  & 0.8989 & 0.9226 \\ \cline{2-7} 
                           & RF NoClu.:    & 0.728  & 0.7995 & 0.8796 & 0.8995 & 0.9075 \\ \cline{2-7} 
                           & RF proposed:  & 0.722  & 0.8882 & 0.9    & 0.922  & 0.9317 \\ \cline{2-7} 
                           & SVM Ross:     & 0.5565 & 0.6419 & 0.743  & 0.8946 & 0.9124 \\ \cline{2-7} 
                           & SVM Mixture:  & 0.5548 & 0.6414 & 0.8199 & 0.871  & 0.9108 \\ \cline{2-7} 
                           & SVM NoClu:    & 0.5715 & 0.6387 & 0.7806 & 0.8188 & 0.8538 \\ \cline{2-7} 
                           & SVM proposed: & 0.6    & 0.8651 & 0.8968 & 0.9151 & 0.9409 \\ \cline{2-7} 
                           & LR Ross:      & 0.5215 & 0.6199 & 0.7145 & 0.8849 & 0.9048 \\ \cline{2-7} 
                           & LR Mixture:   & 0.5339 & 0.6156 & 0.7941 & 0.8554 & 0.9    \\ \cline{2-7} 
                           & LR NoClu:     & 0.5441 & 0.6199 & 0.7527 & 0.7898 & 0.8124 \\ \cline{2-7} 
                           & LR proposed:  & 0.5796 & 0.85   & 0.8823 & 0.9102 & 0.9306 \\ \hline
\multirow{12}{*}{CLL\_SUB} & RF Ross:      & 0.6522 & 0.6522 & 0.6522 & 0.5652 & 0.6087 \\ \cline{2-7} 
                           & RF Mixture:   & 0.4783 & 0.4348 & 0.3478 & 0.3478 & 0.4348 \\ \cline{2-7} 
                           & RF NoClu.:    & 0.6522 & 0.7391 & 0.6957 & 0.6522 & 0.7391 \\ \cline{2-7} 
                           & RF proposed:  & 0.6522 & 0.6957 & 0.6957 & 0.6522 & 0.6522 \\ \cline{2-7} 
                           & SVM Ross:     & 0.5652 & 0.6522 & 0.6087 & 0.6087 & 0.6957 \\ \cline{2-7} 
                           & SVM Mixture:  & 0.4348 & 0.3478 & 0.2609 & 0.3478 & 0.5217 \\ \cline{2-7} 
                           & SVM NoClu:    & 0.6087 & 0.6522 & 0.6087 & 0.6087 & 0.6522 \\ \cline{2-7} 
                           & SVM proposed: & 0.4783 & 0.6957 & 0.7391 & 0.6087 & 0.7391 \\ \cline{2-7} 
                           & LR Ross:      & 0.6522 & 0.5652 & 0.4783 & 0.6087 & 0.6087 \\ \cline{2-7} 
                           & LR Mixture:   & 0.4783 & 0.3913 & 0.1739 & 0.3478 & 0.4783 \\ \cline{2-7} 
                           & LR NoClu:     & 0.6957 & 0.6957 & 0.5217 & 0.5652 & 0.6087 \\ \cline{2-7} 
                           & LR proposed:  & 0.5652 & 0.6957 & 0.6957 & 0.6087 & 0.6522 \\ \hline
\end{tabular}
}
\end{table}



\end{document}